\newcommand{\ourmethod}{SFR-GNN}
\newtheorem{theorem}{Theorem}
\newtheorem{lemma}{Lemma}
\title{\ourmethod: Simple and Fast Robust GNNs against Structural Attacks}
\author {
    Xing Ai\textsuperscript{\rm 1},
    Guanyu Zhu\textsuperscript{\rm 1},
    Yulin Zhu\textsuperscript{\rm 1},
    Yu Zheng\textsuperscript{\rm 1 \rm 2},
    Gaolei Li\textsuperscript{\rm 3},
    Jianhua Li\textsuperscript{\rm 3},
    Kai Zhou\textsuperscript{\rm 1}
}
\begin{document}

\maketitle

\begin{abstract}
Graph Neural Networks (GNNs) have demonstrated commendable performance for graph-structured data. Yet, GNNs are often vulnerable to adversarial structural attacks as embedding generation relies on graph topology. Existing efforts are dedicated to purifying the maliciously modified structure or applying adaptive aggregation, thereby enhancing the robustness against adversarial structural attacks. It is inevitable for a defender to consume heavy computational costs due to lacking prior knowledge about modified structures. To this end, we propose an efficient defense method, called {\underline S}imple and {\underline F}ast {\underline R}obust {\underline G}raph {\underline N}eural {\underline N}etwork (\ourmethod), supported by mutual information theory. The \ourmethod\  first pre-trains a GNN model using node attributes and then fine-tunes it over the modified graph in the manner of contrastive learning, which is free of purifying modified structures and adaptive aggregation, thus achieving great efficiency gains. Consequently, \ourmethod\  exhibits a 24\%--162\% speedup compared to advanced robust models, demonstrating superior robustness for node classification tasks. 
\end{abstract}

%

\section{Introduction}

Graph Neural Networks (GNNs) have emerged as the leading approach for graph learning tasks across various domains, including recommender system~\cite{zhang2024hi}, social networks \cite{hu2023cost}, and bioinformatics \cite{NEURIPS2023_4db8a681}.
However, numerous studies \cite{Nettack,Minmax,hussain2021structack,zhu2022binarizedattack} have demonstrated the vulnerability of GNNs under \textit{adversarial attacks}, where an attacker can deliberately modify the graph data to cause the misprediction of GNNs. Among them, \textit{structural attacks} \cite{Nettack,Mettack,GraD} have gained prominence due to the unique structural nature of graph data. Specifically, by solely modifying the edges in a graph, structural attacks
hold practical significance in application scenarios where attackers have limited access to the relationships among entities rather than the attributes of the entities themselves.

To defend against structural attacks, numerous robust GNN models \cite{Pro-GNN,GCN-Jaccard,SimP-GCN,FocusedCleaner} have been proposed recently. The main ideas behind these approaches involve purifying the modified graph structure or designing adaptive aggregation mechanisms to avoid aggregating messages through poisoned structures. 
Despite these efforts, existing robust GNNs still encounter significant scalability challenges, which hinder their application in practical scenarios. These scalability issues are mainly attributed to two factors: \textit{computational complexity} and \textit{hyper-parameter complexity}.

\begin{figure}[!t]
    \centering
    \includegraphics[scale=0.55]{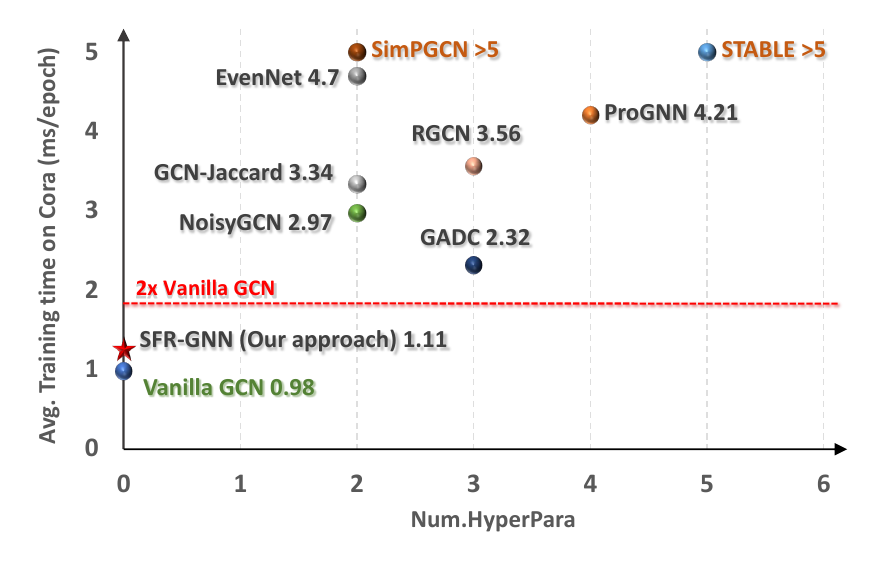}
    \caption{Computational Complexity and Hyper\-parameter Complexity Comparison of Existing Robust GNNs. Our method \ourmethod\ is highlighted with a red star. }
    \label{fig:complexity}
\end{figure}

Specifically, recent research \cite{zhu2021deep, EvenNet, NoisyGCN} reveals that current robust GNN models suffer from high computational complexity due to complex defense mechanisms, such as learning new graph structures or computing edge attention. 
Moreover, these robust models often introduce additional hyper-parameters (e.g. weighting coefficients and thresholds) beyond the basic ones (e.g. learning rates, dropout ratio, epochs).
Unfortunately, effective hyper-parameter tuning often requires extensive background knowledge of the data, which may not always be available due to issues like distribution shifts. The interactions among multiple hyper-parameters compel developers to employ techniques such as grid search or cross-validation to ensure optimal values for all hyper-parameters, complicating model deployment in real-world scenarios \cite{wang2021bag,chen2022bag}. 

Fig.~\ref{fig:complexity} compares state-of-the-art robust GNNs with the vanilla GCN in terms of training time (per epoch in milliseconds) on Cora dataset and the number of extra hyper-parameters. The results indicate that all robust GNNs require more than twice the runtime of the vanilla GCN and introduce at least two extra hyper-parameters (with our method as an exception). It demonstrates that while existing robust GNNs achieve adversarial robustness, it comes at a significant cost in training time and hyper-parameter complexity. 


In response, a natural question emerges: \textit{can we develop a GNN model that achieves adversarial robustness against structural attacks while also being simple and fast?}

In this work, we propose a {\underline S}imple and {\underline F}ast {\underline R}obust {\underline G}raph {\underline N}eural {\underline N}etwork (\ourmethod) that employs a simple but effective learning strategy: pre-training on node attributes and then fine-tuning on structure information. Specifically, given a positioned graph $\mathcal{G}'=(\mathbf{X},\mathbf{A}', \mathbf{Y})$ with manipulated structure $\mathbf{A}'$,  
\ourmethod\ is initially pre-trained using only node attributes $\mathbf{X}$ without structural information involved. Subsequently, the model is fine-tuned over $\mathbf{A}'$, devising a \textit{specialized} graph contrastive learning scheme.

The idea behind this strategy is rooted in the analysis of the structural attack: the attacker meticulously generates a modified structure $\mathbf{A}'$ based on the given node attributes $\mathbf{X}$, which is detrimental to the performance of GNN \textit{with respect to the corresponding $\mathbf{X}$}. Theoretically, structural attacks contaminate the mutual information between $\mathbf{A}'$ and $\mathbf{Y}$ conditioned by $\mathbf{X}$: $I(\mathbf{A}';\mathbf{Y}|\mathbf{X})$ to mislead GNN predictions. However, we indicate the ``\textbf{paired effect}" of structural attacks: $\mathbf{A}'$ is most effective alongside the given $\mathbf{X}$, and is not quite effective with any other $\mathbf{X}'\neq \mathbf{X}$ (detailed in Sec.~\ref{sec:proof}).
Therefore, our strategy achieves robustness against structural attacks by disrupting the ``\textbf{paired effect}". This is achieved by first pre-training on attributes $\mathbf{X}$ to obtain node embedding $\mathbf{Z}_{p}$ and then fine-tunes it paired with $\mathbf{A}'$ to incorporate structural information, which actually pairs $\mathbf{A}'$ with $\mathbf{Z}_{p}$ instead of $\mathbf{X}$, thus mitigating the impact of the manipulated structure on model performance. Despite its simplicity, we provide theoretical support and insights through a mutual information perspective in Sec.~\ref{sec:proof}.



As a result, \ourmethod\ features a lightweight construction with no \textit{additional} hyper-parameters, significantly alleviating the computational and hyper-parameter complexity associated with building robust GNNs. Fig.~\ref{fig:complexity} illustrates that the computational and hyperparameter complexity of \ourmethod\ is close to that of vanilla GCN and outperforms existing robust GNNs, highlighting the simplicity and ease of implementation of \ourmethod. Datasets and codes of this paper are at the supplements.

Our major contributions are summarized as follows.

\begin{itemize}
    \item[1)] We propose a novel, simple and fast robust GNN model named \ourmethod\  that employs an ``attribute pre-training and structure fine-tuning" strategy to achieve robustness against structural attacks. This approach is efficient in that it requires no extra hyper-parameters and is free of time-consuming operations such as purification or attention mechanisms.
    
    \item[2)] We offer a comprehensive theoretical analysis through mutual information theory to provide insights into the proposed method and substantiate its effectiveness. 
    
    \item[3)] The comprehensive evaluation of \ourmethod\ against state-of-the-art baselines on node classification benchmarks, encompassing large-scale graph datasets, reveals that it achieves comparable or superior robustness while significantly enhancing runtime efficiency by a range of 24\% to 162\% compared to state-of-the-art baselines.
    
\end{itemize}

\subsection{Related Works} 

\paragraph{Structural Attacks in Graph Learning.} 
Structural attacks are a popular form of attack covering a wide range of graph learning models beyond GNNs, including
self-supervised learning \cite{bojchevski2019adversarial,zhang2022unsupervised}, signed graph analysis \cite{zhou2023black,zhu2024towards}, recommender systems \cite{lai2023towards}, and so on. The primary idea is to utilize gradient-based methods to search for the optimal graph structure to degrade the performances of various tasks. For instance, Mettack~\cite{Mettack} formulated the global structural poisoning attacks on GNNs as a bi-level optimization problem and leveraged a meta-learning framework to solve it. BinarizedAttack~\cite{BinarizedAttack} simplified graph poisoning attacks against the graph-based anomaly detection to a one-level optimization problem.
HRAT~\cite{HRAT} proposed a heuristic optimization model integrated with reinforcement learning to optimize the structural attacks against Android malware detection. GraD \cite{GraD} 
proposes a reasonable budget allocation mechanism to enhance the effects of structural attacks.

\paragraph{Robust GNNs.}
To defend against structural attacks, a series of robust GNNs are proposed, which typically rely on purifying the modified structure or designing adaptive aggregation strategies. For example, 
GNNGUARD~\cite{GNNGUARD} removes the malicious links during training by considering the cosine similarity of node attributes. 
Zhao et al. \cite{hang-quad} used a conservative Hamiltonian flow to improve the model's performance under adversarial attacks. 
However, common drawbacks of these approaches include high computational overhead and hyper-parameter complexity.
More recently, few works have attempted to develop efficient robust GNNs. For example, NoisyGCN \cite{NoisyGCN} defends against structural attacks by injecting random noise into the architecture of GCN, thereby avoiding complex strategies and improving runtime. Similarly, EvenNet \cite{EvenNet} proposes an efficient strategy that ignores odd-hop neighbors of nodes, with a time complexity that is linear to the number of nodes and edges in the input graph. These efforts significantly reduce the time complexity of building robust GNNs but still introduce additional hyper-parameters. NoisyGCN requires careful selection of the ratio of injected noise
and EvenNet requires the determination of both the order of the graph filter $K$ and the initialization hyper-parameter $\alpha$. This motivates us to develop even simpler while robust GNNs.

\section{Background}
We consider the node classification task in a semi-supervised setting. Specifically, let $\mathcal{G}=(\mathbf{X},\mathbf{A}, \mathbf{Y})$ be an input graph, where $\mathbf{X}\in\mathbb{R}^{n\times d}$ denotes node attributes, $\mathbf{A}\in \{0,1\}^{n\times n}$ is the adjacent matrix, and $\mathbf{Y}$ represents the partially available labels of the nodes in the training set. A GNN model $f_\theta$ parameterized with $\theta$ is trained to predict the remaining node labels through minimizing the training loss $\mathcal{L}_{tr}$ given the training node labels:
\begin{equation}
    \theta^* = \underset{\theta} {\operatorname{arg\,min}}\ \mathcal{L}_{tr}(f_{\theta}(\mathbf{X},\mathbf{A}), \mathbf{Y}).
\end{equation}
This training loss $\mathcal{L}_{tr}$ is commonly employed classification loss such as the Negative Log-Likelihood.

\paragraph{Structural Attacks.}
Structural attacks against semi-supervised node classification naturally fit within a \textit{poisoning attack setting}, where the GNN model is trained and makes predictions over a manipulated graph. In a worst-case scenario, it is assumed that the attacker can arbitrarily modify the graph structure (i.e., $\mathbf{A}$) with the goal of degrading classification performance. Specifically, the attacker seeks to find an optimal structural perturbation $\delta^*$, resulting in a poisoned graph $\mathcal{G}' = (\mathbf{X}, \mathbf{A}' = \mathbf{A} + \delta^*, \mathbf{Y})$. Mathematically, a structural attack can be formulated as solving a bi-level optimization problem:
\begin{align}
    \delta^{*} & = \underset{\delta}{\operatorname{arg\,min}}\ \mathcal{L}_{atk}(f_{\theta^*}(\mathbf{X},(\mathbf{A}+\delta)),\mathbf{Y}) \nonumber \\
     &\text{s.t.} \quad  \theta^* = \underset{\theta} {\operatorname{arg\,min}}\ \mathcal{L}_{tr}(f_{\theta}(\mathbf{X},\mathbf{A} + \delta), \mathbf{Y}),
\end{align}
where $\mathcal{L}_{atk}$ quantifies the attack objective. The attacks \cite{Mettack,GraD,Minmax} mainly differ in their specific algorithms to solve the optimization problem. 



\paragraph{Robust GNNs as Defense.}
Training robust GNN models is a common defense strategy to mitigate structural attacks. In this paper, the defender's goal is to train a robust GNN model from  the poisoned graph to maintain node classification accuracy. We note that the defender only has access to the poisoned graph $\mathcal{G}' = (\mathbf{X}, \mathbf{A} + \delta^*, \mathbf{Y})$, not the clean graph $\mathcal{G}$. Additionally, the defender does not have prior knowledge about how the perturbation $\delta^*$ was generated.



\section{Methodology}
\label{sec::method}
\subsection{Design Intuition}
\label{sec:intuition}

We propose a novel framework for efficiently learning robust GNNs against structural attacks, which employs a straightforward strategy: \textit{attribute pre-training and structure fine-tuning}, to alleviate computational and hyper-parameter complexity. We articulate this design and the intuition behind it through an information theoretical perspective. 
For completeness and better readability, we defer all theoretical analysis to Section~\ref{sec:proof}. 

Our intuition starts from a key observation of the essence of attacks: \textit{structural attacks degrade GNN's performance by contaminating the mutual information between $\mathbf{A}$ and $\mathbf{Y}$} \textbf{conditioned on $\mathbf{X}$}, denoted as $I(\mathbf{A};\mathbf{Y}|\mathbf{X})$ (see Lemma \ref{lemma:1} for details). That is, given fixed attributes $\mathbf{X}$, the attacker can generate a poisoned structure $\mathbf{A}'$ to effectively attack GNNs.
Moreover, the structural attack has a ``\textbf{paired effect}": the poisoned structure $\mathbf{A}'$ is effective with the given $\mathbf{X}$, and is not quite effective with any other $\mathbf{X}'\neq \mathbf{X}$ (see Lemma.~\ref{lemma:2}).

The above analysis reveals the key to designing robust GNNs: create a mismatch between $\mathbf{A}'$ and the attributes $\mathbf{X}$. Previous works did so by trying to purify $\mathbf{A}'$, however, with high computational complexity. 
We employ a totally different strategy: attribute pre-training and structure fine-tuning essentially involves obtaining a latent node embedding $\mathbf{Z}$ through pre-train on $\mathbf{X}$, and then fine-tune $\mathbf{Z}$ with $\mathbf{A}'$ to incorporate structural information. This approach allows the model to learn from the \textit{less harmful} $I(\mathbf{A}';\mathbf{Y}|\mathbf{Z})$ instead of the contaminated $I(\mathbf{A}';\mathbf{Y}|\mathbf{X})$ (see Theorem.~\ref{theorem:1}).

However, since $\mathbf{Z}$ is pre-trained from $\mathbf{X}$, 
there exists overlap between $I(\mathbf{A}';\mathbf{Y}|\mathbf{Z})$ and $I(\mathbf{A}';\mathbf{Y}|\mathbf{X})$, meaning that structural attacks still affects $I(\mathbf{A}';\mathbf{Y}|\mathbf{Z})$. We thus further propose a novel contrastive learning approach to learn structural information from $I(\mathbf{A}';\mathbf{Y}|\mathbf{Z})$ while mitigating the attack effect (see Theorem.~\ref{theorem:2}). The detailed constructions are presented in the next section.

\begin{figure}[!t]
    \centering
    \includegraphics[scale=0.26]{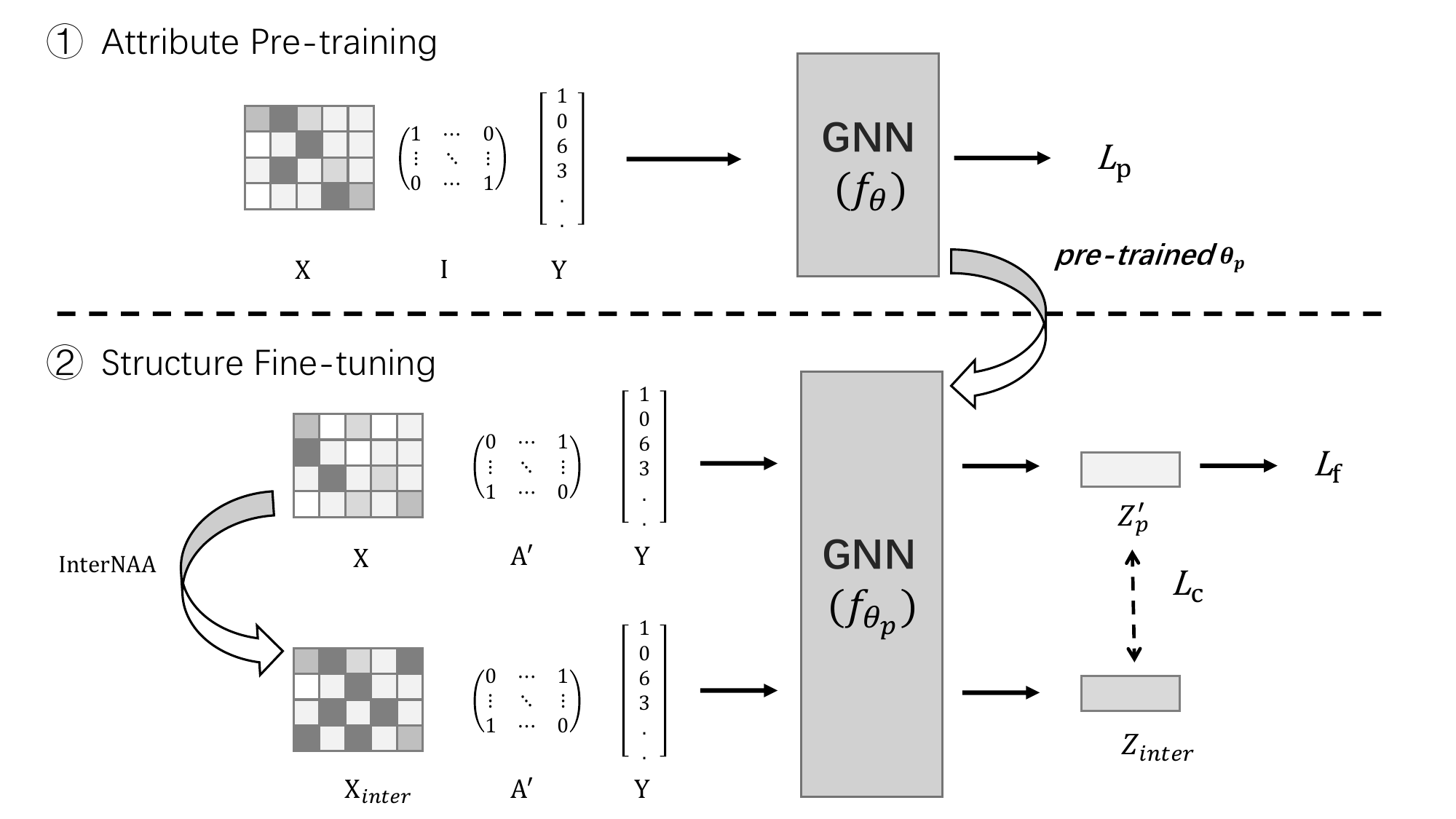}
    \caption{Framework of \ourmethod.}
    \vspace{-0.35cm}
    \label{fig:framework}
\end{figure}

\subsection{Detailed Construction}
To implement the intuition in Sec.~\ref{sec:intuition}, we propose a novel method, namely {\underline Si}mple and {\underline F}ast {\underline R}obust {\underline G}raph {\underline N}eural {\underline N}etwork (\ourmethod) consisting of two main stages: attributes pre-training and structure fine-tuning as shown in Fig.~\ref{fig:framework} and Alg.~\ref{alg:framework}. First, \ourmethod\ pre-trains over the node attributes without structural information to generate node embeddings. Subsequently, \ourmethod\ fine-tunes the node embeddings with the \textit{modified} adjacency matrix to incorporate structural information. 

\paragraph{Attributes Pre-training.}
Attributes pre-training is employed to learn node embeddings solely from node attributes $\mathbf{X}$. Specifically, a randomly initialized GNN model $f_{\theta}$ with parameters $\theta$ takes node attributes $\mathbf{X}$ of the input graph $\mathcal{G}'=(\mathbf{X},\mathbf{A}',\mathbf{Y})$ and an identity matrix $\mathbf{I}$ as inputs and aims to minimize the pre-training loss $\mathcal{L}_{p}$ to learn node embeddings $\mathbf{Z}_{p}$:
\begin{align}
    \theta_{p} =  \underset{\theta}{\operatorname{arg\,min}}\ \mathcal{L}_{p}(f_{\theta}(\mathbf{X},\mathbf{I}), \mathbf{Y}), \quad
     \mathbf{Z}_{p} = f_{\theta_{p}}(\mathbf{X},\mathbf{I}).
\end{align}

The choice of pre-training loss $\mathcal{L}_{p}$ can be any common classification loss, such as the Negative Log-Likelihood Loss function (NLL). Since the pre-training process completely excludes $\mathbf{A'}$ and $\mathbf{Z}_{p}$ is learned from $\mathbf{X}$ without being modified by structural attacks, $\mathbf{Z}_{p}$ is uncontaminated. Lines 3-7 of Alg.~\ref{alg:framework} shows the attributes pre-training stage.

Although the embeddings $\mathbf{Z}_{p}$ learned through attribute pre-training are sufficiently ``clean", the lack of structural information makes $\mathbf{Z}_{p}$ insufficient to predict labels accurately. Hence, we propose structure fine-tuning, which adjusts $\mathbf{Z}_{p}$ using $\mathbf{A}'$ to incorporate some structural information.

\paragraph{Structure Fine-tuning.}

In structure fine-tuning, the model 
is initialized by the pre-trained parameters $\theta_{p}$ and minimizes the fine-tuning loss function $\mathcal{L}_{f}$ and contrastive loss function $\mathcal{L}_{c}$ simultaneously:
\begin{align}
    &\theta^{*}  = \underset{\theta_p}{\operatorname{arg\,min}}\ \mathcal{L}_{f}(\mathbf{Z}_{p}^{'}, \mathbf{Y}) + \mathcal{L}_{c}(\mathbf{Z}_{p}^{'}, \mathbf{Z}_{inter}), \nonumber\\
    &\mathbf{Z}_{p}^{'} =  f_{\theta_{p}}(\mathbf{X_{train}},\mathbf{A}'), \quad \mathbf{Z}_{inter} = f_{\theta_{p}}(\mathbf{X_{inter}},\mathbf{A}'),
\end{align}
where the the fine-tuning loss function $\mathcal{L}_{f}$ is as same as $\mathcal{L}_{p}$, $\mathcal{L}_{c}$ is any typical contrastive function such as InfoNCE. $\mathbf{X_{inter}}$ is generated by the proposed Inter-class Node Attributes Augmentation (InterNAA), which replaces the node feature of each node $v$ in the training set with the average node feature of several nodes with the different class as $v$ that are sampled randomly from the training set. The number of samples equals the degree of the node $v$. The process of InterNAA is shown in Lines 9-17 in Alg.~\ref{alg:framework}.

The primary objectives of the structure fine-tuning stage are twofold: to ensure that $\mathbf{Z}_{p}^{'}$ contains structural information and to prevent it from being influenced by contaminated information in structure ($I(\mathbf{A}';\mathbf{Y}|\mathbf{X})$). The former is achieved by minimizing fine-tuning loss $\mathcal{L}_{f}$, while the latter is achieved by minimizing contrastive loss $\mathcal{L}_{c}$. A detailed theoretical analysis is provided in Sec.~\ref{sec:proof}. Here, we offer an intuitive explanation.

Firstly, the pre-trained parameters $\theta_p$ are used to initialize the model $f$. However, unlike the training stage, $f$ receives $\mathbf{A}'$ instead of $\mathbf{I}$ as input, which means $f$ fine-tunes the pre-trained embeddings $\mathbf{Z}_{p}$ using structure information to obtain $\mathbf{Z}_{p}^{'}$. Besides, by combining contrastive learning techniques to maximize the similarity between $\mathbf{Z}_{p}^{'}$ and $\mathbf{Z}_{inter}$, we effectively align $\mathbf{Z}_{p}^{'}$ with the less harmful $I(\mathbf{A}';\mathbf{Y}|\mathbf{X_{inter}})$ rather than the contaminated $I(\mathbf{A}';\mathbf{Y}|\mathbf{X})$.  $I(\mathbf{A}';\mathbf{Y}|\mathbf{X_{inter}})$ is less harmful 
because we replace $\mathbf{X}$ with $\mathbf{X_{inter}}$ generated by InterNAA, akin to reducing the lethality of a gun by providing it with mismatched bullets.

\begin{algorithm}[tb]
\small
\caption{Simple and Fast Robust Graph Neural Network}
\begin{algorithmic}[1]
    \STATE{\bfseries Input}: Input graph $\mathcal{G}'(\mathbf{X}, \mathbf{A}')$, includes modified adjacency matrix $\mathbf{A}'$ and node attributes $\mathbf{X}$, identity matrix $\mathbf{I}$ and GNN model $f$ with parameter $\theta$, pre-training epoch $pretrain\_epoch$, finetune epoch $finetune\_epoch$, pre-taining loss $\mathcal{L}_{p}$ and fine-tuning loss $\mathcal{L}_{f}$ and contrastive loss $\mathcal{L}_{c}$, node set $V_{train}$, $\mathbf{X}$, $\mathbf{Y}$ for training
    
    \STATE \textit{/*Attributes pre-training*/}
    \FOR{$e=1,2,...,pretrain\_epoch$}
        \STATE $\mathbf{Z}_{\theta} = f_{\theta}(\mathbf{X}, \mathbf{I})$
        \STATE $\theta =\theta + \nabla_{\theta}$ $\mathcal{L}_{p}(\mathbf{Z}_{\theta}, \mathbf{Y})$
    \ENDFOR
    \STATE $\theta_{p}$ = $\theta$
    \STATE \textit{/*Inter-class Node Attributes Augmentation*/}
    \STATE empty set $\mathbf{X_{inter}}$.
    \FOR{$v\in V_{train}$}
        \STATE $c = v.class$
        \STATE InterClassSet=$\{V_{train}.class\neq c\}$
        \STATE num = v.degree
        \STATE $inter\_class=$RandomChoice(num,InterClassSet)
        \STATE $\mathbf{X}_{v\_inter} = mean(\mathbf{X}_{inter\_class})$
        \STATE add $\mathbf{X}_{v\_inter}$ into $\mathbf{X_{inter}}$
    \ENDFOR
    \STATE \textit{/*Structure fine-tuning*/}
    \FOR{$e=1,2,...,finetune\_epoch$}

        \STATE $\mathbf{Z}_{p}^{'} = f_{\theta_{p}}(\mathbf{X}, \mathbf{A}')$, \quad $\mathbf{Z}_{inter}=f_{\theta_{p}}(\mathbf{X_{inter}}, \mathbf{A}')$
        \STATE $\mathcal{L} = \mathcal{L}_{f}(\mathbf{Z}_{\theta_{p}}^{'},\mathbf{Y}) + \mathcal{L}_{c}(\mathbf{Z}_{p}^{'},\mathbf{Z}_{inter})$
        \STATE $\theta_{p} = \theta_{p} + \nabla_{\theta_{p}}\mathcal{L}$
    \ENDFOR
    \STATE $\theta^{*} = \theta_{p}$
    \RETURN $f_{\theta^{*}}$
\end{algorithmic}
\label{alg:framework}
\end{algorithm}

\paragraph{Computational Complexity.}
The computational complexity consists of two parts: the attribute pre-training stage (Lines 3-7 in Alg.~\ref{alg:framework}) and the structure fine-tuning stage (Lines 19-24 in Alg.~\ref{alg:framework}). Assuming our network is composed of $L$ layers of graph convolutional layers, with $F$ hidden units per layer, $N$ nodes and $E$ edges in the graph, pretraining epochs $e_p$, and finetuning epochs $e_f$. Since the attributes pretraining does not utilize the adjacency matrix $\mathbf{A}'$, its computational complexity can be considered equivalent to that of a multi-layer perceptron (MLP), which is $\mathrm{O}(e_{p}LNF^{2})$. 

As for structure fine-tuning, the nodes in the training set are traversed to generate $\mathbf{X_{inter}}$ (Lines 9-17 in Alg.~\ref{alg:framework}), with computational complexity of $\mathrm{O}(\sigma N\bar{d}F)$, where $\sigma$ is the training ratio and $\bar{d}$ is the average degrees. The computational complexity of obtaining $\mathbf{Z}_{p}^{'}$ and $\mathbf{Z}_{inter}$, is equal to applying twice calculations of GCNs: $\mathrm{O}(2*(LNF^{2}+LEF))$ \cite{chen2020scalable}. The computational complexity for computing the contrastive loss is $\mathrm{O}(\sigma ^{2}N ^{2}F)$ \cite{zhang2022esco, alon2024optimal}. 

Thus the overall computational complexity of \ourmethod\ is $\mathrm{O}(e_{p}LNF^{2}+\sigma N\bar{d}F+e_{f}(2LNF^{2}+2LEF)+\sigma ^{2}N ^{2}F)$.
In the worst case when the graph is fully connected, where $\bar{d}=N$ and $E=N^2$, the complexity is 
$\mathrm{O}(e_{p}LNF^{2}+\sigma N^{2}F+e_{f}(2LNF^{2}+2LN^{2}F+\sigma ^{2}N ^{2}F))$. Since the training ratio $\sigma$ is less than $1$, $e_{p}$ and $e_{f}$ are constants smaller than $N$, and their impact on the overall complexity is negligible. Hence, the total complexity of \ourmethod\ is $\mathrm{O}(LNF^{2}+LN^{2}F)$, which is on par with that of GCN, and significantly lower than that of existing robust GNNs. The experiments in Sec.~\ref{sec:experiments} substantiate this claim.

\section{Theoretical Analysis}
\label{sec:proof}





Our theoretical analysis serves two purposes: first, to analyze the essence of structural attacks and the paired effect from the perspective of mutual information, providing a theoretical explanation for our intuition; second, to theoretically prove the effectiveness of our proposed "attributes pre-training, structure fine-tuning" strategy. 

Given the fundamental properties of mutual information, performance degradation of GNN can be attributed to the maliciously generated adjacency matrix over true node attributes.
Accordingly, we  provide the understanding of structural attacks from an information-theoretic perspective as in Lemma.~\ref{lemma:1}.

\begin{lemma}[\textbf{Essence of Structural Attacks}]\label{lemma:1}
 Structural attacks degrade GNNs’ performance through generating the modified adjacency matrix $\mathbf{A}'$ to contaminate the mutual information between the labels $\mathbf{Y}$ and $\mathbf{A}'$ conditioned by $\mathbf{X}$,  which essentially uses the mutual information $I(\mathbf{A}'; \mathbf{Y}|\mathbf{X})$. 
\end{lemma}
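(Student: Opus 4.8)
The plan is to connect a GNN's achievable classification performance to the information its inputs carry about the labels, and then to isolate the single term that a structural attack can influence. First I would invoke the data processing inequality: because any GNN output $\hat{\mathbf{Y}} = f_\theta(\mathbf{X},\mathbf{A})$ is a function of the pair $(\mathbf{X},\mathbf{A})$, we have $I(\hat{\mathbf{Y}};\mathbf{Y}) \le I((\mathbf{X},\mathbf{A});\mathbf{Y})$, so no predictor can recover more label information than its inputs supply. Pairing this with Fano's inequality $H(\mathbf{Y}\mid\hat{\mathbf{Y}}) \le 1 + P_e\log|\mathcal{Y}|$ and the identity $H(\mathbf{Y}\mid\hat{\mathbf{Y}}) = H(\mathbf{Y}) - I(\hat{\mathbf{Y}};\mathbf{Y})$ yields $P_e \ge \big(H(\mathbf{Y}) - I((\mathbf{X},\mathbf{A});\mathbf{Y}) - 1\big)/\log|\mathcal{Y}|$, which gives the key monotonicity: the smaller $I((\mathbf{X},\mathbf{A});\mathbf{Y})$ is, the larger the unavoidable error, hence the worse the best achievable accuracy.

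Next I would apply the chain rule of mutual information to split the relevant quantity into an attribute term and a structure term:
\begin{equation}
I\big((\mathbf{X},\mathbf{A}');\mathbf{Y}\big) = I(\mathbf{X};\mathbf{Y}) + I(\mathbf{A}';\mathbf{Y}\mid\mathbf{X}).
\end{equation}
The crucial observation is that a structural attack perturbs only the adjacency matrix, leaving $\mathbf{X}$ and the joint law of $(\mathbf{X},\mathbf{Y})$ untouched; consequently $I(\mathbf{X};\mathbf{Y})$ is invariant under the attack, and the \emph{only} term the attacker can move is $I(\mathbf{A}';\mathbf{Y}\mid\mathbf{X})$. Combining with the previous step, the accuracy ceiling is governed by $I(\mathbf{X};\mathbf{Y}) + I(\mathbf{A}';\mathbf{Y}\mid\mathbf{X})$, of which only the conditional term is attacker-controllable. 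Therefore, to degrade performance the attacker must drive down (``contaminate'') $I(\mathbf{A}';\mathbf{Y}\mid\mathbf{X})$, which is exactly the claim.

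The main obstacle I anticipate is bridging the gap between the \emph{information-theoretic optimum} that Fano's inequality bounds and the performance of the \emph{specific trained model} $f_{\theta^*}$ that the bi-level attack objective $\mathcal{L}_{atk}$ actually targets. Fano lower-bounds the Bayes error rather than the loss of one particular $\theta$, so I would close the gap by treating the GNN as an expressive enough estimator whose best-case error tracks the Fano bound, and by arguing that minimizing $\mathcal{L}_{atk}$ over $\delta$ is consistent with reducing $I(\mathbf{A}';\mathbf{Y}\mid\mathbf{X})$. A secondary subtlety is that ``contaminate'' should cover both deleting informative edges and inserting misleading ones; this is handled automatically because the single scalar $I(\mathbf{A}';\mathbf{Y}\mid\mathbf{X})$ measures the net useful structural information irrespective of which edit type the attacker employs.
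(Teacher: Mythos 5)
Your proof is correct and lands on the same central decomposition as the paper --- the chain rule $I((\mathbf{X},\mathbf{A}');\mathbf{Y}) = I(\mathbf{X};\mathbf{Y}) + I(\mathbf{A}';\mathbf{Y}\mid\mathbf{X})$ together with the observation that a structural attack cannot move $I(\mathbf{X};\mathbf{Y})$ --- but you reach that decomposition by a different bridge between mutual information and GNN performance. The paper argues in the forward direction: minimizing the training loss is minimizing $H(\mathbf{Y}\mid f_{\theta}(\mathbf{X},\mathbf{A}))$, hence maximizing $I(f_{\theta}(\mathbf{X},\mathbf{A});\mathbf{Y})$, and it then replaces the model output by the raw inputs via an appeal to the injectivity or linearity of common GNN architectures; the attacker's bi-level objective is finally rewritten as minimizing $I((\mathbf{A}+\delta);\mathbf{Y}\mid\mathbf{X})$. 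You instead go through the data processing inequality and Fano's inequality, which yields a lower bound on the error of \emph{any} predictor in terms of $I((\mathbf{X},\mathbf{A});\mathbf{Y})$. Your route is more rigorous on that leg --- it avoids the somewhat hand-wavy injectivity step and bounds achievable error directly --- whereas the paper's route connects more literally to the actual objectives $\mathcal{L}_{tr}$ and $\mathcal{L}_{atk}$ that the lemma is about. The gap you flag (Fano bounds the information-theoretic optimum, not the loss of the specific trained $\theta^{*}$ that the attack targets) is genuine, but the paper's proof carries the exactly analogous gap in its injectivity appeal; for a lemma that is ultimately a qualitative characterization of what structural attacks do, the two arguments sit at the same level of informality, and your remark that the single scalar $I(\mathbf{A}';\mathbf{Y}\mid\mathbf{X})$ covers both edge insertion and deletion is a point the paper does not make explicit.
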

The significance of Lemma.~\ref{lemma:1} lies in highlighting that structural attacks essentially generate modified structure $\mathbf{A}'$ according to corresponding node attributes $\mathbf{X}$, which implies the potential relationships between $\mathbf{A}'$ and $\mathbf{X}$.
Building on Lemma.~\ref{lemma:1}, we propose Lemma.~\ref{lemma:2} blow to demonstrate the important correspondence between $\mathbf{A}'$ and $\mathbf{X}$ in $I(\mathbf{A}'; \mathbf{Y}|\mathbf{X})$.
Namely, $I(\mathbf{A}'; \mathbf{Y}|\mathbf{X})$ can only maximally degrade GNN performance under the condition of $\mathbf{X}$.
\begin{lemma}[\textbf{Paired Effect of Structural Attacks}]\label{lemma:2}
    For any $\mathbf{X}'\neq \mathbf{X}$, where $\mathbf{X}',\mathbf{X}\in\mathbb{R}^{n\times d}$, the  mutual information $I(\mathbf{A}'; \mathbf{Y}|\mathbf{X}')$ is less harmful to GNNs than $I(\mathbf{A}'; \mathbf{Y}|\mathbf{X})$.
\end{lemma}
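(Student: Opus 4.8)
The plan is to exploit the fact that the poisoned adjacency matrix $\mathbf{A}'$ is not an arbitrary perturbation but the solution of the attacker's bi-level optimization, in which $\delta^*$ is tuned \emph{specifically against the true attributes} $\mathbf{X}$. First I would fix a precise notion of ``harm'': following Lemma~\ref{lemma:1}, the damage an attack inflicts on a GNN is mediated by the conditional mutual information $I(\mathbf{A}';\mathbf{Y}|\mathbf{X})$, so I would declare $I(\mathbf{A}';\mathbf{Y}|\mathbf{X}_0)$ to be ``more harmful'' the more it drives up the attack objective $\mathcal{L}_{atk}(f_{\theta^*}(\mathbf{X}_0,\mathbf{A}'),\mathbf{Y})$ when the classifier is forced to use attributes $\mathbf{X}_0$ together with structure $\mathbf{A}'$. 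The goal is then to show that substituting any $\mathbf{X}'\neq\mathbf{X}$ for $\mathbf{X}$ cannot increase this quantity.

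Next I would carry out the core optimality argument. Because $\delta^*=\argmin_\delta \mathcal{L}_{atk}(f_{\theta^*}(\mathbf{X},\mathbf{A}+\delta),\mathbf{Y})$ is computed using $\mathbf{X}$, the perturbation encodes precisely the correlation between $\mathbf{A}'$ and $\mathbf{X}$ that maximizes the GNN's aggregation error on $\mathbf{X}$. Since a graph-convolution layer aggregates neighbor attributes through $\mathbf{A}'\mathbf{X}$, the attack selects edges so that the rows of $\mathbf{A}'\mathbf{X}$ are pulled across the wrong decision boundary. I would make this quantitative by Taylor-expanding $\mathcal{L}_{atk}$ in its attribute argument around $\mathbf{X}$: because $\delta^*$ is an optimal (stationary) point of the attacker's problem at $\mathbf{X}$, the harmful directional sensitivity is first-order aligned with $\mathbf{X}$, so feeding a mismatched $\mathbf{X}'$ moves the aggregated representation off the adversarially crafted direction and lowers $\mathcal{L}_{atk}$, hence the harm.

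I would then translate this loss-level statement back into the mutual-information language of the statement, arguing that a smaller attack objective corresponds to a smaller \emph{contaminated} component inside $I(\mathbf{A}';\mathbf{Y}|\mathbf{X}')$ relative to $I(\mathbf{A}';\mathbf{Y}|\mathbf{X})$: the conditional dependence that $\mathbf{A}'$ shares with $\mathbf{Y}$ was manufactured through the channel $\mathbf{X}$, so conditioning on an $\mathbf{X}'$ that is statistically unaligned with $\delta^*$ breaks that manufactured path and leaves only the residual, less-targeted dependence.

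The hard part, I expect, is the bridge in the previous paragraph: mutual information measures statistical dependence, not misclassification, so rigorously equating ``more harmful'' with ``larger contaminated mutual information'' requires an auxiliary monotonicity ingredient -- for instance a Fano-type bound tying the Bayes classification error under $(\mathbf{A}',\mathbf{X}_0)$ to $I(\mathbf{A}';\mathbf{Y}|\mathbf{X}_0)$ -- together with the assumption that the attack genuinely attains (or closely approximates) the optimum of its bi-level problem. I would therefore state the result up to this monotonicity-plus-optimality assumption rather than prove it unconditionally, and flag that the strict conclusion (``less harmful'' rather than merely ``no more harmful'') hinges on $\mathbf{A}'$ carrying nonzero adversarial alignment with $\mathbf{X}$ that is not replicated by $\mathbf{X}'$.
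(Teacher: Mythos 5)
Your route is genuinely different from the paper's, and the comparison is instructive. The paper never leaves mutual-information space: it writes the interaction-information identity $I(\mathbf{A}';\mathbf{Y}|\mathbf{X}) = I(\mathbf{A}';\mathbf{Y}) - I(\mathbf{A}';\mathbf{Y};\mathbf{X})$, invokes Lemma~1 to say that an ideal attacker minimizing $I(\mathbf{A}';\mathbf{Y}|\mathbf{X})$ drives the interaction term to its upper bound $\tau$, and then the claim is a one-line subtraction: $I(\mathbf{A}';\mathbf{Y}|\mathbf{X}') - I(\mathbf{A}';\mathbf{Y}|\mathbf{X}) = \tau - I(\mathbf{A}';\mathbf{Y};\mathbf{X}') \geq 0$, with ``less harmful'' \emph{defined} as ``larger conditional mutual information remaining for the GNN to exploit.'' You instead work in loss space, using stationarity of $\delta^*$ at $\mathbf{X}$ and a Taylor expansion of $\mathcal{L}_{atk}$ in the attribute argument, and then try to translate back to mutual information. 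The bridge you correctly flag as the hard part --- relating attack loss to conditional MI via a Fano-type monotonicity --- is precisely what the paper avoids rather than solves, by taking the MI magnitude itself as the operational meaning of harm; conversely, your loss-level argument is closer to what an attacker actually optimizes and would, if completed, justify that definitional choice. Both arguments ultimately rest on the same unproved optimality premise (the attack attains or approaches its optimum against $\mathbf{X}$), so neither is unconditional; the paper's is shorter and more elementary, yours is more operational but leaves its key step as a stated assumption. One caution on your version: stationarity of $\delta^*$ with respect to $\delta$ does not by itself control the behavior of $\mathcal{L}_{atk}$ under perturbations of the \emph{attribute} argument (those are different variables), so the first-order alignment claim in your second paragraph would need an explicit argument about $\partial \mathcal{L}_{atk}/\partial \mathbf{X}$ at the attacked configuration rather than an appeal to optimality in $\delta$ alone.
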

Notably, Lemma.~\ref{lemma:2} implies a new defense strategy against structural attacks from the root cause. Unlike existing methods that focus on purifying the modified structure or employing adaptive aggregation, our approach does not require any operations on the modified structure. Instead, it replaces the corresponding attributes to disrupt the paired effect, thereby reducing the attack effectiveness of the modified structure on GNNs.

Motivated by Lemma.~\ref{lemma:2}, \ourmethod\  pre-trains node embeddings $\mathbf{Z}_{p}$ solely on node attributes $\mathbf{X}$, and force the proposed model to learn information from $I(\mathbf{A}';\mathbf{Y}|\mathbf{Z}_{p})$, which actually replaces $\mathbf{X}$ with $\mathbf{Z}_{p}$. We provide Theorem.~\ref{theorem:1} to demonstrate $\mathbf{Z}_{p}$ shares mutual information with labels $\mathbf{Y}$ and $I(\mathbf{A}';\mathbf{Y}|\mathbf{Z}_{p})$ is less harmful compared to $I(\mathbf{A}';\mathbf{Y}|\mathbf{X})$.

\begin{theorem}\label{theorem:1}
\ourmethod's pre-training stage maximizes the mutual information $I(\mathbf{Z}_{p};\mathbf{Y})$ between $\mathbf{Z}_{p}$ and $\mathbf{Y}$, where $I(\mathbf{A}';\mathbf{Y}|\mathbf{Z}_{p})$ is less harmful to GNNs compared to $I(\mathbf{A}';\mathbf{Y}|\mathbf{X})$.
\end{theorem}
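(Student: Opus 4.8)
The plan is to decompose the statement into its two assertions and dispatch them in order: first that the pre-training objective maximizes $I(\mathbf{Z}_{p};\mathbf{Y})$, and second that $I(\mathbf{A}';\mathbf{Y}|\mathbf{Z}_{p})$ is less harmful than $I(\mathbf{A}';\mathbf{Y}|\mathbf{X})$. The first is an information-theoretic consequence of using a likelihood-based pre-training loss; the second is an application of the paired effect established in Lemma~\ref{lemma:2}.

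For the first assertion, I would exploit the standard variational link between the negative log-likelihood loss $\mathcal{L}_{p}$ and mutual information. Writing $I(\mathbf{Z}_{p};\mathbf{Y})=H(\mathbf{Y})-H(\mathbf{Y}|\mathbf{Z}_{p})$ and noting that $H(\mathbf{Y})$ is fixed by the data, the key observation is that for the classifier distribution $q_{\theta}(\mathbf{Y}|\mathbf{Z}_{p})$ induced by $f_{\theta}$,
\begin{align}
-\mathbb{E}_{p(\mathbf{Z}_{p},\mathbf{Y})}\big[\log q_{\theta}(\mathbf{Y}|\mathbf{Z}_{p})\big]
&= H(\mathbf{Y}|\mathbf{Z}_{p}) + \mathbb{E}\,D_{\mathrm{KL}}\big(p(\cdot|\mathbf{Z}_{p})\,\|\,q_{\theta}(\cdot|\mathbf{Z}_{p})\big) \nonumber\\
&\ge H(\mathbf{Y}|\mathbf{Z}_{p}). \nonumber
\end{align}
The left-hand side is exactly the population version of $\mathcal{L}_{p}$. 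Hence minimizing $\mathcal{L}_{p}$ drives down an upper bound on $H(\mathbf{Y}|\mathbf{Z}_{p})$, equivalently maximizing the variational lower bound $H(\mathbf{Y})+\mathbb{E}[\log q_{\theta_{p}}(\mathbf{Y}|\mathbf{Z}_{p})]$ on $I(\mathbf{Z}_{p};\mathbf{Y})$, with equality when $q_{\theta_{p}}$ recovers the true posterior. This reduces to a routine calculation once the bound is in place.

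For the second assertion, I would invoke Lemma~\ref{lemma:2} with $\mathbf{Z}_{p}$ in the role of the alternative attribute matrix $\mathbf{X}'$. The essential point is that $\mathbf{Z}_{p}=f_{\theta_{p}}(\mathbf{X},\mathbf{I})$ is a learned nonlinear embedding that differs from the raw attributes $\mathbf{X}$ against which the attacker optimized the perturbation $\delta^{*}$ (and hence $\mathbf{A}'$). Because the bi-level attack tailored $\mathbf{A}'$ specifically to $\mathbf{X}$, and $\mathbf{Z}_{p}\neq\mathbf{X}$, the paired effect no longer holds at full strength, so Lemma~\ref{lemma:2} yields that the contamination carried by $I(\mathbf{A}';\mathbf{Y}|\mathbf{Z}_{p})$ is less damaging than that of $I(\mathbf{A}';\mathbf{Y}|\mathbf{X})$.

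The main obstacle is reconciling this with the fact that $\mathbf{Z}_{p}$ is a deterministic function of $\mathbf{X}$, so the two are not information-theoretically independent and the paired effect is only weakened rather than destroyed (the residual overlap is precisely what motivates the subsequent contrastive stage in Theorem~\ref{theorem:2}). I would address this by treating harmfulness as a monotone function of the alignment between $\mathbf{A}'$ and the conditioning variable, as implicitly used in Lemma~\ref{lemma:2}, and then arguing that the lossy, label-oriented compression performed during pre-training discards the fine-grained attribute detail the attack exploited, so the alignment strictly decreases even though some overlap survives. A secondary technical point is the dimension mismatch---Lemma~\ref{lemma:2} is stated for $\mathbf{X}',\mathbf{X}\in\mathbb{R}^{n\times d}$ whereas $\mathbf{Z}_{p}$ lives in a feature space of a different width---which I would resolve by noting that the proof of Lemma~\ref{lemma:2} does not depend on the column count of the conditioning variable, or by stating a mild generalization to conditioning variables of arbitrary finite dimension.
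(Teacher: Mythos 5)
Your proposal follows essentially the same route as the paper's own proof: the first assertion is handled via the identity $I(\mathbf{Z}_{p};\mathbf{Y})=H(\mathbf{Y})-H(\mathbf{Y}|\mathbf{Z}_{p})$ with $H(\mathbf{Y})$ fixed (your variational/KL-gap refinement makes rigorous what the paper asserts as a direct equivalence between minimizing $\mathcal{L}_{p}$ and minimizing $H(\mathbf{Y}|\mathbf{Z}_{p})$), and the second assertion is obtained exactly as in the paper by substituting $\mathbf{Z}_{p}$ for $\mathbf{X}'$ in Lemma~\ref{lemma:2} to conclude $I(\mathbf{A}';\mathbf{Y}|\mathbf{Z}_{p})\geq I(\mathbf{A}';\mathbf{Y}|\mathbf{X})$. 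Your added caveats about the deterministic dependence of $\mathbf{Z}_{p}$ on $\mathbf{X}$ and the dimension mismatch are reasonable strengthenings that the paper itself only partially acknowledges (via Lemma~\ref{lemma:3}), but they do not change the argument.
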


Although $I(\mathbf{A}';\mathbf{Y}|\mathbf{Z}_{p})$ is less harmful than $I(\mathbf{A}';\mathbf{Y}|\mathbf{X})$, there may be an overlap between them since $\mathbf{Z}_{p}$ is learned from $\mathbf{X}$, leading to the contamination of $I(\mathbf{A}';\mathbf{Y}|\mathbf{Z}_{p})$, as demonstrated in Lemma.\ref{lemma:3}.
Lemma.\ref{lemma:3} essentially explains the reason for employing contrastive learning, i.e., $I(\mathbf{A}';\mathbf{Y}|\mathbf{Z}_{p})$ may be contaminated. Based on Lemma.\ref{lemma:3}, we introduce contrastive learning to align $I(\mathbf{A}';\mathbf{Y}|\mathbf{Z}_{p})$ to $I(\mathbf{A}';\mathbf{Y}|\mathbf{X_{inter}})$ to prevent it from being contaminated, as demonstrated in Theorem.~\ref{theorem:2}.

\begin{lemma}\label{lemma:3}
    There exists an overlap between $I(\mathbf{A}';\mathbf{Y}|\mathbf{Z}_{p})$ and $I(\mathbf{A}';\mathbf{Y}|\mathbf{X})$, which consequently leads to the contamination of $I(\mathbf{A}';\mathbf{Y}|\mathbf{Z}_{p})$.
\end{lemma}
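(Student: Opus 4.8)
The plan is to exhibit the overlap explicitly as an algebraic decomposition of $I(\mathbf{A}';\mathbf{Y}\mid\mathbf{Z}_{p})$ in which the contaminated quantity $I(\mathbf{A}';\mathbf{Y}\mid\mathbf{X})$ from Lemma~\ref{lemma:1} appears verbatim as a summand. The starting observation is structural: the pre-training stage fixes parameters $\theta_{p}$ and sets $\mathbf{Z}_{p}=f_{\theta_{p}}(\mathbf{X},\mathbf{I})$, so $\mathbf{Z}_{p}$ is a deterministic function of $\mathbf{X}$. Consequently $H(\mathbf{Z}_{p}\mid\mathbf{X})=0$, and conditioning on $\mathbf{X}$ subsumes conditioning on $\mathbf{Z}_{p}$; in particular $I(\mathbf{A}';\mathbf{Y}\mid\mathbf{X},\mathbf{Z}_{p})=I(\mathbf{A}';\mathbf{Y}\mid\mathbf{X})$. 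This single identity is the engine of the whole argument.

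Next I would expand the joint conditional mutual information $I(\mathbf{A}';(\mathbf{Y},\mathbf{X})\mid\mathbf{Z}_{p})$ by the chain rule in its two orderings and equate them:
\begin{equation}
I(\mathbf{A}';\mathbf{X}\mid\mathbf{Z}_{p}) + I(\mathbf{A}';\mathbf{Y}\mid\mathbf{X},\mathbf{Z}_{p}) = I(\mathbf{A}';\mathbf{Y}\mid\mathbf{Z}_{p}) + I(\mathbf{A}';\mathbf{X}\mid\mathbf{Y},\mathbf{Z}_{p}).
\end{equation}
Substituting the identity from the first step and rearranging yields
\begin{equation}
I(\mathbf{A}';\mathbf{Y}\mid\mathbf{Z}_{p}) = I(\mathbf{A}';\mathbf{Y}\mid\mathbf{X}) + \bigl[\,I(\mathbf{A}';\mathbf{X}\mid\mathbf{Z}_{p}) - I(\mathbf{A}';\mathbf{X}\mid\mathbf{Y},\mathbf{Z}_{p})\,\bigr],
\end{equation}
where the bracketed term is the conditional interaction information $I(\mathbf{A}';\mathbf{X};\mathbf{Y}\mid\mathbf{Z}_{p})$. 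This decomposition is exactly the claimed overlap: the conditional mutual information that fine-tuning learns from, $I(\mathbf{A}';\mathbf{Y}\mid\mathbf{Z}_{p})$, contains the contaminated $I(\mathbf{A}';\mathbf{Y}\mid\mathbf{X})$ as an additive component, so by Lemma~\ref{lemma:1} it inherits the adversarial contamination.

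The last step is to argue that the overlap is genuinely nontrivial rather than degenerate. Because $\mathbf{Z}_{p}$ is a compressed, lossy encoding of $\mathbf{X}$ (its capacity is far below that of the raw attributes, and pre-training optimizes only for $I(\mathbf{Z}_{p};\mathbf{Y})$ per Theorem~\ref{theorem:1}), the attributes $\mathbf{X}$ retain residual information about $\mathbf{A}'$ beyond what $\mathbf{Z}_{p}$ captures, i.e. $I(\mathbf{A}';\mathbf{X}\mid\mathbf{Z}_{p})>0$; this is what keeps the overlap term from vanishing and motivates the contrastive correction analyzed in Theorem~\ref{theorem:2}.

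I anticipate the main obstacle to be this final non-degeneracy step: the interaction information $I(\mathbf{A}';\mathbf{X};\mathbf{Y}\mid\mathbf{Z}_{p})$ can be negative, so the ``overlap'' is not a clean containment of nonnegative information regions and cannot be argued purely by monotonicity of conditioning. The rigorous content is the algebraic decomposition, which always holds; making precise that the shared component is present and nonzero requires a modeling assumption that $\mathbf{Z}_{p}$ does not already absorb all of the attribute information relevant to $\mathbf{A}'$ and $\mathbf{Y}$ — plausible given the compression inherent in pre-training, but it must be stated explicitly rather than derived from the mutual-information axioms alone.
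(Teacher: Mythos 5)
Your argument is correct as far as the informal statement permits, but it takes a genuinely different route from the paper. The paper's proof has two parts: first, an information-diagram style argument treating the conditional mutual informations as set regions, asserting that $I(\mathbf{A}';\mathbf{Y}|\mathbf{Z}_{p})$ and $I(\mathbf{A}';\mathbf{Y}|\mathbf{X})$ are both contained in $I(\mathbf{A}';\mathbf{Y})$ and that their intersection vanishes only under an equality that is ``not always feasible to guarantee in practice''; second, a mechanistic argument that linearizes $f_{\theta}$ as an SGC, writes the fine-tuned output as $(\mathbf{A}')^{k}\mathbf{Z}_{p}+(\mathbf{A}')^{k}\mathbf{X}\Delta W$, and identifies the second summand with a victim GNN contaminated by $I(\mathbf{A}';\mathbf{Y}|\mathbf{X})$. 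You instead exploit the fact that $\mathbf{Z}_{p}$ is a deterministic function of $\mathbf{X}$ to get $I(\mathbf{A}';\mathbf{Y}|\mathbf{X},\mathbf{Z}_{p})=I(\mathbf{A}';\mathbf{Y}|\mathbf{X})$ and then derive, via the two orderings of the chain rule, the exact identity
\begin{equation*}
I(\mathbf{A}';\mathbf{Y}|\mathbf{Z}_{p}) = I(\mathbf{A}';\mathbf{Y}|\mathbf{X}) + I(\mathbf{A}';\mathbf{X};\mathbf{Y}|\mathbf{Z}_{p}),
\end{equation*}
which exhibits the contaminated quantity verbatim as a summand. Your decomposition is more rigorous than the paper's set-style overlap argument (which conflates union and intersection notation and never states a precise claim), and it makes the ``overlap'' an exact algebraic fact rather than a diagrammatic heuristic; what it does not give you is the paper's concrete picture of \emph{how} contamination enters during fine-tuning through the parameter update acting on $(\mathbf{A}')^{k}\mathbf{X}$, which is the part of the paper's proof that connects the lemma to the actual algorithm. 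Both arguments ultimately rest on an unproved non-degeneracy assumption --- yours that $I(\mathbf{A}';\mathbf{X}|\mathbf{Z}_{p})>0$ and that the interaction information does not cancel, the paper's that the intersection is nonzero ``in practice'' --- and you are more candid than the paper in flagging that this step is a modeling assumption rather than a consequence of the mutual-information axioms.
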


\begin{theorem}\label{theorem:2}
    \ourmethod's structure fine-tuning stage maximizes $I(\mathbf{A}';\mathbf{Y}| \mathbf{Z}_{p}^{'})$ to learn structural information and align it to $I(\mathbf{A}';\mathbf{Y}| \mathbf{X}_{inter})$ to prevent from being contaminated.
\end{theorem}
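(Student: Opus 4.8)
The plan is to split the fine-tuning objective $\mathcal{L}_f(\mathbf{Z}_{p}',\mathbf{Y}) + \mathcal{L}_c(\mathbf{Z}_{p}',\mathbf{Z}_{inter})$ into its two terms and bound each with a standard variational mutual-information estimate, then read off the two halves of the claim. For the first half (``maximizes $I(\mathbf{A}';\mathbf{Y}\mid\mathbf{Z}_{p}')$ to learn structural information''), the key point is that $\mathbf{Z}_{p}' = f_{\theta_{p}}(\mathbf{X},\mathbf{A}')$ now aggregates over $\mathbf{A}'$, whereas pre-training used the identity $\mathbf{I}$. Since $\mathcal{L}_f$ is the NLL, the classifier head $q(\mathbf{Y}\mid\mathbf{Z}_{p}')$ gives the usual bound $I(\mathbf{Z}_{p}';\mathbf{Y}) \ge H(\mathbf{Y}) - \mathcal{L}_f$, so minimizing $\mathcal{L}_f$ maximizes a lower bound on $I(\mathbf{Z}_{p}';\mathbf{Y})$. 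I would then apply the chain rule $I(\mathbf{Z}_{p},\mathbf{A}';\mathbf{Y}) = I(\mathbf{Z}_{p};\mathbf{Y}) + I(\mathbf{A}';\mathbf{Y}\mid\mathbf{Z}_{p})$ to isolate the incremental, structure-derived term $I(\mathbf{A}';\mathbf{Y}\mid\mathbf{Z}_{p})$. Because Theorem~\ref{theorem:1} already fixes $I(\mathbf{Z}_{p};\mathbf{Y})$ at its pre-training maximum, the additional information that $\mathbf{Z}_{p}'$ acquires under $\mathcal{L}_f$ is exactly this conditional structural quantity, which is what the statement means by ``learning structural information.''

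For the second half (``align it to $I(\mathbf{A}';\mathbf{Y}\mid\mathbf{X}_{inter})$ to prevent from being contaminated''), I would invoke the InfoNCE lower bound on mutual information, $I(\mathbf{Z}_{p}';\mathbf{Z}_{inter}) \ge \log K - \mathcal{L}_c$, so that minimizing the contrastive term maximizes a lower bound on $I(\mathbf{Z}_{p}';\mathbf{Z}_{inter})$. The crucial structural observation is that $\mathbf{Z}_{p}' = f_{\theta_{p}}(\mathbf{X},\mathbf{A}')$ and $\mathbf{Z}_{inter} = f_{\theta_{p}}(\mathbf{X}_{inter},\mathbf{A}')$ share the \emph{same} adjacency $\mathbf{A}'$ but differ in their node features; hence the information driven up by the contrastive positive pairs is the structural signal carried by $\mathbf{A}'$, while the attribute-specific content is separated by the negatives. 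Maximizing $I(\mathbf{Z}_{p}';\mathbf{Z}_{inter})$ therefore pulls the structural encoding of $\mathbf{Z}_{p}'$ toward the mismatched pairing $(\mathbf{X}_{inter},\mathbf{A}')$, i.e. toward $I(\mathbf{A}';\mathbf{Y}\mid\mathbf{X}_{inter})$ rather than the genuine, contaminated pairing $I(\mathbf{A}';\mathbf{Y}\mid\mathbf{X})$ identified in Lemma~\ref{lemma:3}. Since $\mathbf{X}_{inter}\neq\mathbf{X}$ by construction of InterNAA, Lemma~\ref{lemma:2} then guarantees $I(\mathbf{A}';\mathbf{Y}\mid\mathbf{X}_{inter})$ is less harmful than $I(\mathbf{A}';\mathbf{Y}\mid\mathbf{X})$, completing the ``prevent from being contaminated'' conclusion.

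I expect the main obstacle to be making the alignment step rigorous. Maximizing the scalar $I(\mathbf{Z}_{p}';\mathbf{Z}_{inter})$ does not by itself pin down \emph{which} conditional mutual information $\mathbf{Z}_{p}'$ converges to, so I would decompose the information content of $\mathbf{Z}_{p}'$ into a component shared with $\mathbf{Z}_{inter}$ and a private component, and argue — using that $\mathbf{A}'$ is the only common input and that InterNAA destroys the true attribute–structure pairing — that the shared component is precisely the structural information compatible with $\mathbf{X}_{inter}$. This step will likely need an additional regularity assumption that the GNN's feature path and structure path contribute separably to the embedding, so that the contrastive term can legitimately be read as transferring the less-harmful structural pairing into $\mathbf{Z}_{p}'$; without such an assumption the ``alignment'' remains heuristic rather than a strict identity of mutual-information terms.
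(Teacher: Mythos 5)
Your overall plan is sound and its first half (the variational bound $I(\mathbf{Z}_{p}';\mathbf{Y})\ge H(\mathbf{Y})-\mathcal{L}_f$ plus the chain rule to isolate $I(\mathbf{A}';\mathbf{Y}\mid\mathbf{Z}_{p})$) is actually more explicit than what the paper writes, since the paper's proof of this theorem concentrates almost entirely on the alignment step. Where you genuinely diverge is in how the alignment is justified. You appeal to Lemma~2: since $\mathbf{X}_{inter}\neq\mathbf{X}$, the conditional mutual information $I(\mathbf{A}';\mathbf{Y}\mid\mathbf{X}_{inter})$ is less harmful. The paper instead exploits the \emph{specific} construction of InterNAA: because each node's features are replaced by an average of features from \emph{different classes}, $\mathbf{X}_{inter}$ is deliberately uninformative about $\mathbf{Y}$, so $H(\mathbf{Y}\mid\mathbf{X}_{inter})$ is large and $I(\mathbf{X}_{inter};\mathbf{Y})\to 0$. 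Plugging this into the decomposition
\begin{equation*}
I((\mathbf{X}_{inter},\mathbf{A}');\mathbf{Y}) = I(\mathbf{A}';\mathbf{Y}) + I(\mathbf{X}_{inter};\mathbf{Y}\mid\mathbf{A}') \le I(\mathbf{A}';\mathbf{Y}) + I(\mathbf{X}_{inter};\mathbf{Y})
\end{equation*}
collapses the information carried by $\mathbf{Z}_{inter}$ to essentially the unconditioned $I(\mathbf{A}';\mathbf{Y})$, so that aligning $\mathbf{Z}_{p}'$ with $\mathbf{Z}_{inter}$ transfers structure information stripped of any attribute pairing at all. This difference matters: your Lemma~2 argument would apply equally to \emph{any} augmentation with $\mathbf{X}'\neq\mathbf{X}$ (node dropping, feature masking, random sampling), whereas the paper's ablation shows those underperform InterNAA; only the class-mismatching argument explains why this particular augmentation is the right one. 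Finally, the obstacle you flag --- that maximizing the scalar $I(\mathbf{Z}_{p}';\mathbf{Z}_{inter})$ does not by itself identify \emph{which} component of information is shared --- is a real gap, and the paper's proof does not close it either; it asserts rather than derives that the contrastive alignment selects the structural component. Your proposed separability assumption on the feature and structure paths is a reasonable way to make that step rigorous, and is not present in the paper.
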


Theorem.\ref{theorem:2} demonstrates the effectiveness of contrastive learning and the proposed InterNAA to mitigate the contamination. We provide complete proofs in the Appendix and present empirical experiments in Sec.~\ref{sec:experiments} to support the aforementioned claims.

\begin{table}[t]
\scriptsize
\centering
\begin{tabular}{|c|c|c|}
\hline
Dataset                      & hyper-parameters & Range                       \\ \hline
\multirow{2}{*}{SimP-GCN}    & $\lambda$        & \{0.1,1,10\}                \\ \cline{2-3} 
                             & $\gamma$         & \{0.01,0.1,1\}              \\ \hline
\multirow{4}{*}{ProGNN}      & $\alpha$         & 0.00025 to 0.064            \\ \cline{2-3} 
                             & $\beta$          & 0 to 5                      \\ \cline{2-3} 
                             & $\gamma$         & 0.0625 to 16                \\ \cline{2-3} 
                             & $\lambda$        & 1.25 to 320                 \\ \hline
\multirow{5}{*}{STABLE}      & $k$              & 1 to 13                     \\ \cline{2-3} 
                             & $\alpha$         & -0.5 to 3                   \\ \cline{2-3} 
                             & $t_1$            & 0 to 0.05                   \\ \cline{2-3} 
                             & $t_2$            & -1 to 0.5                   \\ \cline{2-3} 
                             & $p$              & 0.2                         \\ \hline
\multirow{2}{*}{EvenNet}     & $k$              & \{4,6,8,10\}                \\ \cline{2-3} 
                             & $\alpha$         & 0 to 0.5                    \\ \hline
\multirow{2}{*}{NoisyGCN}    & $\beta$          & 0.1 to 0.5                  \\ \cline{2-3} 
                             & $\epsilon$          & \{0, 0.1\}                  \\ \hline
\multirow{3}{*}{GADC}        & $\xi$            & 0.01 to 1                   \\ \cline{2-3} 
                             & $K$              & \{4,8,16,32\}               \\ \cline{2-3} 
                             & $\lambda$        & \{4,8,16,32\}               \\ \hline
\end{tabular}
\caption{Hyper-parameters of baselines and their choices. }
\label{tab:hyper}
\end{table}

\section{Experiments}
\label{sec:experiments}


\begin{table*}[ht]
\centering
\scriptsize
\begin{tabular}{|c|c|c|cccccccc|c|}
\hline
Datasets & Attacker & ptb(\%)                                                            & RGCN      & GCN-Jaccard & SimP-GCN  & Pro-GNN   & STABLE    & EvenNet   & GADC      & NoisyGCN  & \ourmethod\ (Ours) \\ \hline
\multicolumn{1}{|c|}{\multirow{7}{*}{Cora}}     & \multicolumn{2}{c|}{clean}                         & \textbf{83.4±0.2} & 82.2±0.5       & 82.1±0.6          & 83.0±0.2          & 81.1±0.5          & \underline{83.1±0.4} & 79.0±0.3          & 82.9±0.6          & \textbf {83.4±0.5}    \\\cline{2-3}
\multicolumn{1}{|c|}{}                          & \multicolumn{1}{c|}{\multirow{3}{*}{Mettack}} & 0.05 & 72.0±0.5          & 78.8±0.6       & 80.5±1.7          & \underline{82.3±0.5}    & 81.4±0.6          & 80.0±0.8 & 78.7±0.3          & 77.4±1.2          & \textbf{82.6±0.6} \\
\multicolumn{1}{|c|}{}                          & \multicolumn{1}{c|}{}                      & 0.1  & 68.9±0.3          & 76.9±0.5       & 79.0±0.9          & 79.0±0.6          & \underline{81.0±0.4}    & 77.8±1.1 & 78.2±0.6          & 75.6±1.2          & \textbf{82.1±0.6} \\
\multicolumn{1}{|c|}{}                          & \multicolumn{1}{c|}{}                      & 0.2  & 62.8±1.2          & 75.2±0.7       & 76.1±2.0          & 73.3±1.6          & \underline{80.4±0.7}    & 78.2±0.9 & 77.1±0.6          & 74.5±1.3          & \textbf{81.1±0.8} \\\cline{2-3} 
\multicolumn{1}{|c|}{}                          & \multicolumn{1}{c|}{\multirow{3}{*}{GraD}} & 0.05 & 81.7±0.5          & 81.4±0.6       & 80.9±0.5          &\underline{81.8±0.5}    & 81.1±0.4          & 80.4±0.5 & 79.0±0.3          & \textbf{82.0±0.6} & \textbf{82.0±0.6} \\
\multicolumn{1}{|c|}{}                          & \multicolumn{1}{c|}{}                      & 0.1  & 79.9±0.4          & 80.3±0.4       & 80.9±0.5          & 80.9±0.3          & 80.2±0.5 & 78.7±0.7 & 78.7±0.3          & \underline{81.0±0.3}          & \textbf{81.1±0.8}    \\
\multicolumn{1}{|c|}{}                          & \multicolumn{1}{c|}{}                      & 0.2  & 77.9±0.5          & \textbf{79.8±0.6} & 78.9±0.8          & 78.3±0.2          & \textbf{79.8±0.3} & 78.3±1.0 & 78.6±0.4          & 79.1±0.6          & \underline{79.6±0.9}          \\ \hline
\multicolumn{1}{|c|}{\multirow{7}{*}{Citeseer}} & \multicolumn{2}{c|}{clean}                        & 71.8±0.6          & 72.6±0.6       & \underline{73.8±0.7}    & 73.3±0.7          & \underline{73.9±0.6}          & 73.8±0.5 & 73.4±0.5          & 72.3±0.4          & \textbf{75.1±0.4} \\\cline{2-3}  
\multicolumn{1}{|c|}{}                          & \multicolumn{1}{c|}{\multirow{3}{*}{Mettack}} & 0.05 & 70.5±1.0          & 72.0±0.4       & 73.0±0.7    & 72.9±0.6          & 72.6±0.3          & \underline{73.5±0.4} & 73.0±0.8          & 71.3±0.3          & \textbf{74.7±0.5} \\
\multicolumn{1}{|c|}{}                          & \multicolumn{1}{c|}{}                      & 0.1  & 69.4±0.8          & 71.8±0.5       & \underline{74.1±0.7}    & 72.5±0.8          & 73.5±0.5          & 73.3±0.4 & 73.0±0.8          & 71.2±0.4          & \textbf{74.3±0.4} \\
\multicolumn{1}{|c|}{}                          & \multicolumn{1}{c|}{}                      & 0.2  & 67.7±0.5          & 70.6±0.3       & 70.9±0.5          & 70.0±2.3          & 72.8±0.7          & \underline{73.2±0.5} & 72.9±0.8    & 70.2±0.7          & \textbf{73.7±0.4} \\\cline{2-3} 
\multicolumn{1}{|c|}{}                          & \multicolumn{1}{c|}{\multirow{3}{*}{GraD}} & 0.05 & 71.6±0.8          & 72.1±0.9       & 73.5±0.7    & 72.2±0.1          & 73.5±0.4          & \underline {73.8±0.8} & 73.5±0.8          & 72.0±0.8          & \textbf{75.0±0.6} \\ 
\multicolumn{1}{|c|}{}                          & \multicolumn{1}{c|}{}                      & 0.1  & 70.7±0.6          & 72.3±0.7       & \underline {73.5±0.6}    & 72.1±0.1          & 72.5±0.6          & 73.0±0.5 & 73.4±0.8          & 71.8±0.4          & \textbf{74.5±0.6} \\ 
\multicolumn{1}{|c|}{}                          & \multicolumn{1}{c|}{}                      & 0.2  & 67.6±0.6          & 70.1±0.8       & 72.6±0.7          & 70.6±0.6          & 72.1±0.5          & 72.7±0.5 & \underline {73.2±0.7}    & 70.4±0.5          & \textbf{73.4±0.6} \\ \hline
\multicolumn{1}{|c|}{\multirow{7}{*}{Pubmed}}   & \multicolumn{2}{c|}{clean}                        & 85.4±0.1          & 86.2±0.1       & \underline {87.1±0.1}    & \textbf{87.3±0.2} & 85.0±0.1          & 86.7±0.1 & 86.4±0.1          & 85.0±0.0          & 85.4±0.4          \\\cline{2-3} 
\multicolumn{1}{|c|}{}                          & \multicolumn{1}{c|}{\multirow{3}{*}{Mettack}} & 0.05 & 83.0±0.2          & 83.6±0.5       & \underline {86.5±0.1}    & \textbf{87.2±0.1} & 81.3±0.1          & 86.0±0.2 & 86.3±0.1          & 79.7±0.2          & 85.3±0.4          \\
\multicolumn{1}{|c|}{}                          & \multicolumn{1}{c|}{}                      & 0.1  & 83.0±0.2          & 79.6±0.2       & 86.0±0.2          & \textbf{87.2±0.1} & 79.0±0.1          & 85.6±0.2 & \underline {86.3±0.2}    & 67.4±0.2          & 85.1±0.3          \\ 
\multicolumn{1}{|c|}{}                          & \multicolumn{1}{c|}{}                      & 0.2  & 81.4±0.2          & 70.5±0.4       & \underline{ 85.7±0.2}    & \textbf{87.2±0.2} & 78.4±0.1          & 85.3±0.2 & 86.1±0.1          & 56.5±0.4          & 84.5±0.4          \\\cline{2-3}  
\multicolumn{1}{|c|}{}                          & \multicolumn{1}{c|}{\multirow{3}{*}{GraD}} & 0.05 & 82.9±0.1          & 84.0±0.2       & \textbf{86.6±0.2} & 85.4±0.0          & 82.6±0.1          & 86.2±0.2 & \underline{ 86.3±0.1}    & 82.8±0.1          & 85.2±0.3          \\
\multicolumn{1}{|c|}{}                          & \multicolumn{1}{c|}{}                      & 0.1  & 81.8±0.1          & 82.7±0.1       & \underline{ 86.0±0.2}    & 85.0±0.2          & 81.5±0.1          & 85.9±0.2 & \textbf{86.1±0.1} & 81.7±0.1          & 84.5±0.5          \\ 
\multicolumn{1}{|c|}{}                          & \multicolumn{1}{c|}{}                      & 0.2  & 79.6±0.1          & 80.5±0.2       & \underline{ 85.3±0.4}    & 82.8±0.1          & 79.2±0.1          & 85.4±0.2 & \textbf{86.1±0.1} & 80.0±0.1          & 83.4±0.4          \\ \hline
\end{tabular}
\caption{Average classification accuracy (± standard deviation) of 10 runs under two structural attacks with different perturbation ratios (ptb). The best and second-best results are highlighted in bold and underlined, respectively.}
\label{tab1}
\end{table*}


\paragraph{Datasets. } We conduct experiments on three widely used benchmarks: Cora \cite{cora}, CiteSeer \cite{citeseer}, Pubmed \cite{pubmed}, and two large-scale graph datasets (ogbn-arxiv, and ogbn-products), with details presented in the Appendix. Furthermore, experimental results on two heterophilic graph datasets, demonstrating the robustness of the proposed method, are provided in the Appendix.

\paragraph{Implementation and Baselines. }We conducted an empirical comparison against eight state-of-the-art baseline defense algorithms, including RGCN \cite{RGCN}, GCN-Jaccard \cite{GCN-Jaccard}, SimP-GCN \cite{SimP-GCN}, Pro-GNN \cite{Pro-GNN}, STABLE \cite{STABLE}, EvenNet \cite{EvenNet}, GADC \cite{GADC}, and NoisyGCN \cite{NoisyGCN}, which achieve remarkable performance in terms of structure attack defense. 
We select two representative structure attack methods, i.e., Mettack \cite{Mettack} and GraD \cite{GraD}, to verify the robustness of the proposed method and baselines. Source code and configuration of baselines are obtained from either the public implementation of DeepRobust \cite{deeprobust}, or the official implementation of the authors. Detailed configurations are deferred to the Appendix.

\paragraph{Experiments Settings. } Experiments are conducted on a device with 16 Gen Intel(R) Core(TM) i9-12900F cores and an NVIDIA L20 (48GB memory). On Cora, Citeseer and Pubmed, we follow the data splitting method of DeepRobust: randomly selecting 10\% of the nodes for training, 10\% for validation, and the remaining 80\% for testing. As for ogbn-arxiv and ogbn-products, we follow dataset splits provided by OGB \cite{ogbn}. As for hyper-parameters of baselines, we follow the authors’ suggestion to search for the optimal values. Table.~\ref{tab:hyper} shows all hyper-parameters and their ranges. It can be observed that existing robust GNNs require multiple hyper-parameters, and some of them have a large search range. Consequently, existing robust GNNs require many training runs to determine the optimal values of all hyper-parameters.

\begin{table*}[t]
    \centering
    \small
    \begin{tabular}{|c|c|c|c|c|c|c|c|}
    \hline
        Dataset & ptb(\%) & GCN & GADC & EvenNet & Soft Medoid GDC & Soft Median GDC & \makecell[c]{\ourmethod\ \\ (w/o CL)} \\ \hline
        \multirow{5}{*}{ogbn-arxiv}
         & clean & \underline{66.9+0.32} & 64.1±0.15 & 63.2±1.3 & 57.5±0.24 & 64.1±0.15 & \textbf{67.0±0.22} \\
        ~ & 1\% & 54.8+0.29 & 55.6±0.13 & 36.4±7.92 & 52.2±0.22 & \underline{56.9±0.19} & \textbf{58.8±0.16} \\
        ~ & 5\% & 34.6±0.32 & 45.2±0.17 & 32.4±4.94 & \underline{48.0±0.27} & 47.1±0.21 & \textbf{48.5±0.21} \\
        ~ & 10\% & 29.5±0.58 & 36.4±0.19 & 29.2±2.45 & \textbf{45.4±0.31} & 40.8±0.33 & \underline{41.5±0.21} \\ \cline{2-8}
        ~ & speed(ms/epoch) & 109.15 & 119.8 & \underline{89.9} & 145.3 & 132.7 & \textbf{53.0} \\ \hline  
        \multirow{5}{*}{ogbn-products} & clean & \underline{73.5±0.08} & 73.0±0.05 & \multirow{5}{*}{OOM} & \multirow{5}{*}{OOM} & 64.3±0. & \textbf{74.0±0.28} \\
        & 1\% & 63.6±0.10 & \underline{66.6±0.12} & ~ & ~ & 63.0±0.08 & \textbf{69.9±0.30} \\
        & 5\% & 49.5±0.09 & 58.7±0.17 & ~ & ~ & \underline{59.0±0.11} & \textbf{59.6±0.3} \\
        & 10\% & 46.3±0.11 & 52.5±0.26 & ~ & ~ & \textbf{56.9±0.14} & \underline{54.2±0.26} \\ \cline{2-4} \cline{7-8}
        & speed(ms/epoch) & 367.7 & \underline{355.2} & ~ & ~ & 413.0 & \textbf{190.2} \\ \hline
    \end{tabular}
    \caption{Average classification accuracy (± standard deviation) and average training speed (in milliseconds per epoch) of 10 runs on large-scale graph datasets under PRBCD attacks.}
    \label{tab:largescale}
\end{table*}

\paragraph{Defense Performance.}

To showcase the effectiveness and efficiency of the proposed method, we compare its robustness (Table.~\ref{tab1}) and training time (Table.~\ref{tab:training time}) against two typical attack methods: Mettack and GraD, on three datasets with baselines. 
It's worth noting that the proposed method always achieves the best performance or the second-best performance on Cora and Citeseer, highlighting its robustness, which is on par with or exceeds state-of-the-art baselines. For instance, the proposed method achieves $82.1\%$ accuracy on Cora dataset under Mettack with $10\%$ perturbation ratio while baselines' accuracy ranges from $69\%$ to $81\%$. Besides, for Citeseer, the proposed method achieves tiny but continuous improvements compared to baselines under all perturbation ratios. On the Pubmed dataset, while \ourmethod\ does not surpass strong baselines like SimP-GCN and Pro-GNN, it still outperforms several other baselines. We speculate that the reason is more complex and larger models tend to have an advantage on larger datasets like Pubmed.

Besides the robustness improvements, the proposed method also achieves significant training time speedup compared to baselines as shown in Tabel.~\ref{tab:training time}. Upon examining the table, we can observe that compared to the fastest existing methods in their respective categories, such as NoisyGCN and GADC, the proposed method achieves over a 100\% speedup on Cora, and Citeseer. Conversely, when compared to slower methods like SimP-GCN and STABLE, the proposed method's speed is nearly 10 times that of theirs. The significant speedup achieved by the proposed method can be attributed to the elimination of time-consuming modified structure identification and processing operations.

\paragraph{Scalability to Large-scale Graph.}

\begin{table}[!h]
\centering
\small
\begin{tabular}{|c|c|c|c|}
\hline
Dataset & Cora & Citeseer & Pubmed \\ \hline
RGCN & 3.56 & 3.57 & 41.39  \\
GCN-Jaccard & 3.34 & 3.28 & 13.66  \\
SimP-GCN & 10.42 & 9.89 & 37.68  \\
ProGNN & 4.21 & 4.89 & \textgreater 1,000  \\ 
STABLE & 7.26 & 5.52 & 64.10  \\
EvenNet & 4.70 & 4.65 & 4.75  \\
NoisyGCN & 3.19 & 3.55 & \underline{4.47}  \\
GADC & \underline{2.32} & \underline{3.25} & 6.65  \\
\hline
\ourmethod\  & \makecell[c]{\textbf{1.11}\\($\uparrow 109\%$)} & \makecell[c]{\textbf{1.24}\\($\uparrow 162\%$)} & \makecell[c]{\textbf{3.61}\\($\uparrow 24\%$)} \\ \hline
\end{tabular}
\caption{Average Training Time Comparison (ms/epoch).}
\label{tab:training time}
\end{table}

We conduct experiments on two publicly available large-scale graph datasets, ogbn-arxiv and ogbn-products \cite{ogbn}, to validate the scalability of the proposed method. Owing to memory overflow issues encountered by structure attack methods like Mettack on large-scale graphs, we employ PRBCD \cite{prbcd} as the attack method for these settings, and compare its performance against four defense methods capable of scaling to large graphs. The tests are performed using the officially provided modified adjacency matrices, with the results presented in Table.~\ref{tab:largescale}. Given the substantial memory requirements of the contrastive learning component, to facilitate the scalability of \ourmethod\  to large-scale graphs, we introduce a variant of our approach: \ourmethod\ (w/o CL), which excludes contrastive learning during the structure fine-tuning stage, thereby enabling its effective application to large-scale graphs without encountering memory constraints.

Results in Table.~\ref{tab:largescale} demonstrate that \ourmethod\  consistently achieves either the top or second-best performance across various perturbation ratios on two large-scale datasets. Notably, it also exhibits the fastest runtime, surpassing even GCN in speed. This efficiency stems from the attributes pre-training stage of \ourmethod\ , which is free from structure information related computations. Additionally, EvenNet and Soft Medoid GDC encountered out-of-memory (OOM) issues on ogbn-products. This is attributed to the fact that Soft Medoid GDC incorporates diffusion computations, rendering it less scalable \cite{prbcd}, while EvenNet demands a minimum of 70GB of GPU memory, exceeding the capacity of our experiment device, which is limited to 48GB. It's worth noting that the speed advantage of \ourmethod\  is particularly pronounced in large-scale graphs compared to tiny graphs, which demonstrates the simplicity and effectiveness of \ourmethod.

\paragraph{Ablation Study.}

To validate the effectiveness of the proposed method, we propose two variants: 1) \textbf{\ourmethod\  w/o CL}: This variant lacks the contrastive learning technique and directly fine-tunes the model using the modified adjacency matrix. 2) \textbf{\ourmethod\ w/o Fin}: This variant lacks the whole structure fine-tuning stage, thereby degenerating into a Multilayer Perceptron (MLP). 
The results in Fig.~\ref{fig:ablation} (a) and Fig.~\ref{fig:ablation} (b) show that the accuracies of both variants are consistently lower than that of \ourmethod\ across all attack ratios. \ourmethod\ w/o CL can not performer \ourmethod\ and achieves suboptimal results which proves the Lemma.~\ref{lemma:3} and Theorem.~\ref{theorem:2} in Sec.~\ref{sec:proof}. \ourmethod\ w/o Fin achieved the worst results because the learned representations only contained attribute information without structure information.

Additionally, to validate the effectiveness of the proposed InterNAA, we replace it with commonly used augmentations: Node Dropping (\ourmethod\ w/ND), Edge Removing (\ourmethod\ w/ER), and Feature Masking (\ourmethod\ w/FM). Besides, we provide a variant \ourmethod\ w/Ran which replaces InterNAA with random node attributes sampling. The comparison results are shown in Fig.~\ref{fig:ablation} (a) and Fig.~\ref{fig:ablation} (b). Clearly, InterNAA outperforms other augmentations. We believe this is because other augmentations randomly perturb the elements of the graph and cannot prevent the inflect of contaminated mutual information during the fine-tuning stage. Additionally, to validate the effectiveness of the proposed InterNAA, we replace it with commonly used augmentations: Node Dropping (\ourmethod\ w/ND), Edge Removing (\ourmethod\ w/ER), Feature Masking (\ourmethod\ w/FM) and random node attribute sampling (\ourmethod\ w/Ran). The comparison results are shown in Fig.~\ref{fig:ablation} (a) and Fig.~\ref{fig:ablation} (b). Clearly, InterNAA outperforms the other augmentations. We believe this is because other augmentations randomly perturb the elements of the graph and cannot prevent the influence of contaminated mutual information during the fine-tuning stage.

\begin{figure}[t]
    \centering
    \includegraphics[scale=0.4]{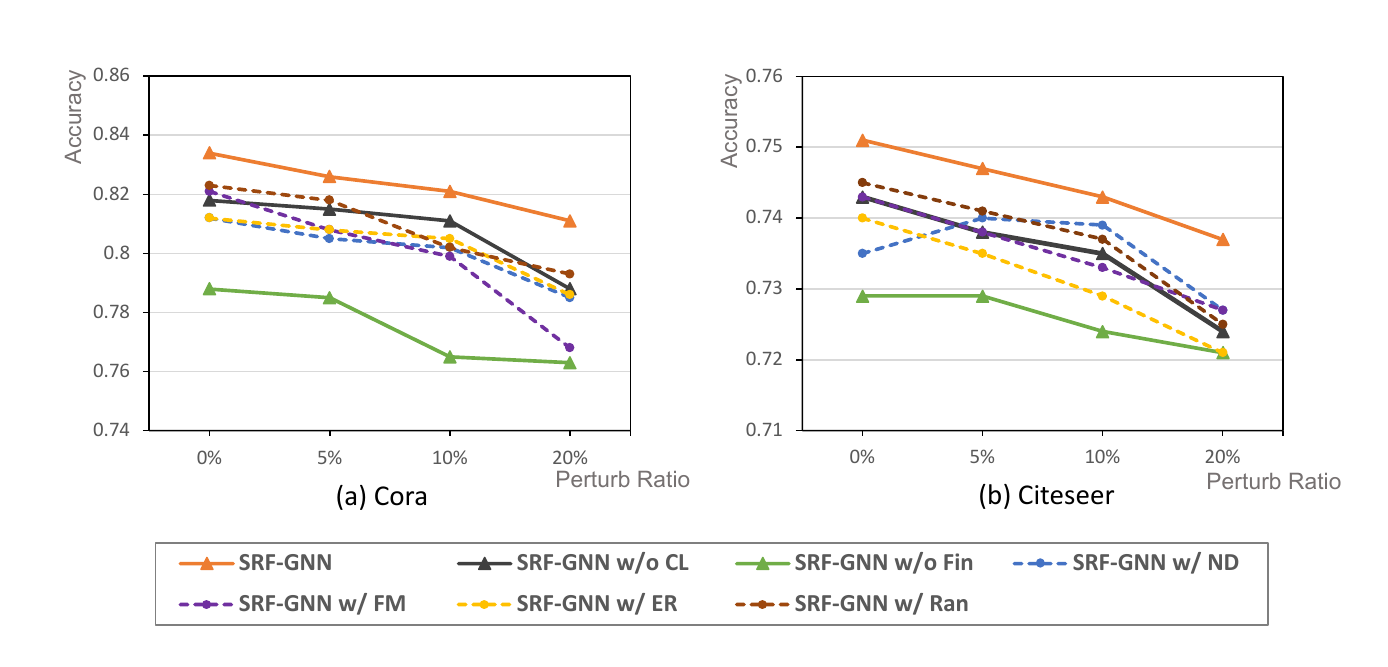}
    \caption{Ablation studies of SFR-GNN. }
    \label{fig:ablation}
\end{figure}

\section{Conclusion}
\label{sec:conclusion}
In this paper, we propose a novel robust GNN: Simple and Fast Robust Graph Neural Network (\ourmethod) against structural attacks. \ourmethod\ utilizes the proposed ``attributes pre-training and structure fine-tuning'' strategy, without the need for purification of the modified structures, thus significantly reducing computational overhead and avoiding the introduction of additional hyper-parameters. We conduct both theoretical analysis and numerical experiments to validate the effectiveness of \ourmethod\ . Experimental results demonstrate that \ourmethod\ achieves robustness comparable to state-of-the-art baselines while delivering a 50\%-136\% improvement in runtime speed. Additionally, it exhibits superior scalability on large-scale datasets. This makes \ourmethod\ a promising solution for applications requiring reliable and efficient GNNs in adversarial settings.

\appendix
\section{A. Theoretical Analyses}
\label{appendix:proofs}

\subsection{A.1 Proof of Lemma.~1}
\label{proof:theorem1}

\begin{lemma}
 Structural attacks degrade GNNs’ performance through generating the modified adjacency matrix $\mathbf{A}'$ to contaminate the mutual information between the labels $\mathbf{Y}$ and $\mathbf{A}'$ conditioned by $\mathbf{X}$,  which essentially uses the mutual information $I(\mathbf{A}'; \mathbf{Y}|\mathbf{X})$. 
\end{lemma}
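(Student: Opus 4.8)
The plan is to characterize GNN performance through an information-theoretic upper bound and then isolate the single term that a structural attacker can actually manipulate. First I would formalize the connection between classification performance and mutual information. Writing $\hat{\mathbf{Y}} = f_\theta(\mathbf{X},\mathbf{A}')$ for the GNN's prediction, the prediction is a (possibly stochastic) function of the inputs, so the data-processing inequality gives $I(\hat{\mathbf{Y}};\mathbf{Y}) \le I(\mathbf{X},\mathbf{A}';\mathbf{Y})$. Combining this with Fano's inequality, which lower-bounds the classification error by a decreasing function of $I(\hat{\mathbf{Y}};\mathbf{Y})$, shows that the best achievable node-classification accuracy is controlled from above by the total input--label mutual information $I(\mathbf{X},\mathbf{A}';\mathbf{Y})$. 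Since minimizing the training loss $\mathcal{L}_{tr}$ (the cross-entropy / NLL) is equivalent to maximizing a variational lower bound on $I(\hat{\mathbf{Y}};\mathbf{Y})$, a well-trained GNN approximately saturates this bound, so its attainable performance tracks $I(\mathbf{X},\mathbf{A}';\mathbf{Y})$.

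Second, I would apply the chain rule of mutual information to decompose this quantity into an attribute term and a structure term:
\begin{equation}
I(\mathbf{X},\mathbf{A}';\mathbf{Y}) = I(\mathbf{X};\mathbf{Y}) + I(\mathbf{A}';\mathbf{Y}\mid\mathbf{X}).
\end{equation}
The first summand quantifies the label information carried by the node attributes alone, while the second captures the additional label information that the (possibly manipulated) structure contributes once the attributes are known.

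Third, I would invoke the defining constraint of a structural attack: the attacker perturbs only the adjacency matrix, leaving $\mathbf{X}$ untouched, so $I(\mathbf{X};\mathbf{Y})$ is invariant under the attack. Consequently, the only term through which the adversary can reduce the total input--label information---and hence degrade GNN performance---is the conditional mutual information $I(\mathbf{A}';\mathbf{Y}\mid\mathbf{X})$. This establishes that structural attacks operate precisely by contaminating $I(\mathbf{A}';\mathbf{Y}\mid\mathbf{X})$, as claimed.

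The main obstacle I anticipate is the rigorous justification of the first step, namely tying the GNN's actual accuracy to the mutual-information bound. The data-processing and Fano inequalities yield only one-sided bounds, so I would need to argue that the cross-entropy-trained GNN is expressive enough to approach the bound (for instance, via a universal-approximation or sufficient-statistic assumption on $f_\theta$), ensuring that adversarial reductions in $I(\mathbf{A}';\mathbf{Y}\mid\mathbf{X})$ genuinely manifest as performance loss rather than being absorbed by slack in the inequalities.
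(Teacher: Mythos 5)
Your proposal is correct and follows essentially the same route as the paper's proof: both reduce GNN performance to the input--label mutual information, apply the chain rule $I(\mathbf{X},\mathbf{A}';\mathbf{Y}) = I(\mathbf{X};\mathbf{Y}) + I(\mathbf{A}';\mathbf{Y}\mid\mathbf{X})$, and observe that a structural attacker can act only on the second term because $I(\mathbf{X};\mathbf{Y})$ is invariant under edge perturbations. The sole difference is how the performance--information link is justified---you use data-processing plus Fano's inequality, whereas the paper passes from $\min H(\mathbf{Y}\mid f_\theta(\mathbf{X},\mathbf{A}))$ to $\max I((\mathbf{X},\mathbf{A});\mathbf{Y})$ by invoking injectivity/linearity of GNN architectures---and both arguments leave exactly the converse gap you flag at the end (that a drop in $I(\mathbf{A}';\mathbf{Y}\mid\mathbf{X})$ actually translates into realized accuracy loss).
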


\begin{proof}
    For a GNN model $f_{\theta}$ parameterized by $\theta$, the objective is to predict labels $\mathbf{Y}$ as accurately as possible by taking node features $\mathbf{X}$ and adjacency matrix $\mathbf{A}$ as inputs. From the perspective of information theory, this objective can be viewed as minimizing the conditional entropy $H(\mathbf{Y}|f_{\theta}(\mathbf{X},\mathbf{A}))$:
    \begin{equation}
        \min\ \mathcal{L}_{tr}(f_{\theta}(\mathbf{X},\mathbf{A}),\mathbf{Y}) \quad \Rightarrow \quad \min\ H(\mathbf{Y}|f_{\theta}(\mathbf{X},\mathbf{A})).
    \end{equation}

    The conditional entropy $H(\mathbf{Y}|f_{\theta}(\mathbf{X},\mathbf{A}))$ measures the uncertainty of $\mathbf{Y}$ given the $f_{\theta}(\mathbf{X},\mathbf{A})$. An effective $f_{\theta}(\mathbf{X},\mathbf{A})$ should be able to predict $\mathbf{Y}$ with high probability, meaning uncertainty is low. Thus the above equation holds.

    According to the principles of mutual information, we have:
    \begin{equation}
        I(f_{\theta}(\mathbf{X},\mathbf{A});\mathbf{Y}) = H(\mathbf{Y}) - H(\mathbf{Y}|f_{\theta}(\mathbf{X},\mathbf{A}).
    \end{equation}
    
    $H(\mathbf{Y})$ is the information entropy of labels which is determined by $\mathbf{Y}$ and fixed. Thus minimizing $H(\mathbf{Y}|f_{\theta}(\mathbf{X},\mathbf{A}))$ is actually maximizing $I(f_{\theta}(\mathbf{X},\mathbf{A});\mathbf{Y})$:
    \begin{equation}
        \min\ H(\mathbf{Y}|f_{\theta}(\mathbf{X},\mathbf{A}))\Rightarrow \quad \max\ I(f_{\theta}(\mathbf{X},\mathbf{A});\mathbf{Y}).
    \end{equation}
    
    Therefore, the learning objective of GNN is actually maximizing the mutual information $I(f_{\theta}(\mathbf{X},\mathbf{A});\mathbf{Y})$:
    \begin{equation}
        \min\ \mathcal{L}_{tr}(f_{\theta}(\mathbf{X},\mathbf{A}),\mathbf{Y})\Rightarrow \quad \max\ I(f_{\theta}(\mathbf{X},\mathbf{A});\mathbf{Y}).
    \end{equation}

    maximizing the mutual information between the labels $\mathbf{Y}$ and the model output $f_{\theta}(\mathbf{X},\mathbf{A})$, which is actually learning information from the mutual information between the labels $\mathbf{Y}$ and the joint distribution $(\mathbf{X}, \mathbf{A})$ because most of GNNs have been demonstrated that satisfy the injective property \cite{GIN} or linear assumption \cite{SGC,S2GC}:
    \begin{equation}\label{GNN goal}
        \max\ I(f_{\theta}(\mathbf{X},\mathbf{A});\mathbf{Y}) \quad \Rightarrow \quad \max\ I((\mathbf{X},\mathbf{A});\mathbf{Y}).
    \end{equation}

    Furthermore, according to the properties of mutual information, we can decompose $I((\mathbf{X},\mathbf{A});\mathbf{Y})$ into $I(\mathbf{X};\mathbf{Y})$ and $I(\mathbf{A};\mathbf{Y}|\mathbf{X})$:
    \begin{equation}\label{MI deco}
        I((\mathbf{X},\mathbf{A});\mathbf{Y}) = I(\mathbf{X};\mathbf{Y})+I(\mathbf{A};\mathbf{Y}|\mathbf{X}).
    \end{equation}
    Thus GNNs' learning objective is actually maximizing $I((\mathbf{X},\mathbf{A});\mathbf{Y})$ and can be decomposed into the maximization of $I(\mathbf{X};\mathbf{Y})$ and the maximization of $I(\mathbf{A};\mathbf{Y}|\mathbf{X})$.

    The goal of the structural attacker is degrading the prediction accuracy of $f_\theta$ as much as possible. To achieve this goal, the structural attacker employs perturbation $\delta^{*}$ to divert the outputs of the victim GNN from the true labels $\mathbf{Y}$ to erroneous predictions $\mathbf{Y}'$, which can be formulated as the minimization of $I((\mathbf{X},(\mathbf{A}+\delta));\mathbf{Y})$ and the maximization of $I((\mathbf{X},(\mathbf{A}+\delta));\mathbf{Y}^{'})$: 
    \begin{equation}
    \mathbf{A}^{'} = \mathbf{A}+\delta^{*}, 
    \end{equation}

    \begin{equation}
    \delta^{*}= \underset{\delta}{\operatorname{arg\,min}}\ I((\mathbf{X},(\mathbf{A}+\delta));\mathbf{Y}).
    \end{equation}

    According to Eq.~\eqref{GNN goal} and Eq.~\eqref{MI deco}, the above goal can be rewritten as:
    \begin{equation}
    \delta^{*}= \underset{\delta}{\operatorname{arg\,min}}\ I(\mathbf{X};\mathbf{Y}) + I((\mathbf{A}+\delta);\mathbf{Y}|\mathbf{X}).
    \end{equation}
    
    Due to $I(\mathbf{X};\mathbf{Y})$ is irrelevant and independent to the structure perturbation $\delta$ thus the actual goal of the attacker is:
    \begin{equation}
    \delta^{*}= \underset{\delta}{\operatorname{arg\,min}}\ I((\mathbf{A}+\delta);\mathbf{Y}|\mathbf{X}).
    \end{equation}
    
    Hence, the structural attacker essentially aims to minimize $I(\mathbf{A}^{'};\mathbf{Y}|\mathbf{X})$ to hinder the victim GNN from extracting adequate information from this mutual information and compel victim GNN to make wrong predictions. We describe this scenario as the contamination of mutual information $I(\mathbf{A}^{'};\mathbf{Y}|\mathbf{X})$.
    
\end{proof}

\subsection{A.2 Proof of Lemma.~2}

\begin{lemma}
    For any $\mathbf{X}^{'}\neq \mathbf{X}$, where $\mathbf{X}^{'},\mathbf{X}\in\mathbb{R}^{n\times d}$, the  mutual information $I(\mathbf{A}^{'}; \mathbf{Y}|\mathbf{X}^{'})$ is less harmful to GNNs than $I(\mathbf{A}^{'}; \mathbf{Y}|\mathbf{X})$.
\end{lemma}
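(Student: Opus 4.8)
The plan is to recast the inequality ``$I(\mathbf{A}';\mathbf{Y}|\mathbf{X}')$ is less harmful than $I(\mathbf{A}';\mathbf{Y}|\mathbf{X})$'' as the ordering $I(\mathbf{A}';\mathbf{Y}|\mathbf{X}')\geq I(\mathbf{A}';\mathbf{Y}|\mathbf{X})$ (a larger conditional mutual information means less contamination, by Lemma~\ref{lemma:1}), and then to convert it into a comparison of symmetric three-way interaction terms in which the conditioning attribute is the only thing that changes. Throughout I treat $\mathbf{X},\mathbf{X}',\mathbf{A}',\mathbf{Y}$ as random variables over the data-generating distribution, so the usual mutual-information calculus applies, and I take Lemma~\ref{lemma:1} as given.

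First I would expand $I(\mathbf{A}';(\mathbf{W},\mathbf{Y}))$ by the chain rule in both orders, for an arbitrary attribute matrix $\mathbf{W}\in\{\mathbf{X},\mathbf{X}'\}$, and rearrange to obtain
\begin{equation}
I(\mathbf{A}';\mathbf{Y}|\mathbf{W}) = I(\mathbf{A}';\mathbf{Y}) - \big(I(\mathbf{A}';\mathbf{W}) - I(\mathbf{A}';\mathbf{W}|\mathbf{Y})\big),
\end{equation}
where the parenthesized quantity is the co-information $I(\mathbf{A}';\mathbf{W};\mathbf{Y})$. Because $I(\mathbf{A}';\mathbf{Y})$ does not depend on the conditioning variable, subtracting the instances $\mathbf{W}=\mathbf{X}$ and $\mathbf{W}=\mathbf{X}'$ cancels it and leaves the clean identity
\begin{equation}
I(\mathbf{A}';\mathbf{Y}|\mathbf{X}') - I(\mathbf{A}';\mathbf{Y}|\mathbf{X}) = I(\mathbf{A}';\mathbf{X};\mathbf{Y}) - I(\mathbf{A}';\mathbf{X}';\mathbf{Y}).
\end{equation}
Hence the lemma is \emph{equivalent} to the claim that the matched attributes share more three-way interaction with $(\mathbf{A}',\mathbf{Y})$ than any mismatched $\mathbf{X}'$, which is precisely the information-theoretic content of the ``paired effect''.

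The second step is to prove $I(\mathbf{A}';\mathbf{X};\mathbf{Y})\geq I(\mathbf{A}';\mathbf{X}';\mathbf{Y})$ by invoking the attack's construction. By Lemma~\ref{lemma:1}, $\mathbf{A}'=\mathbf{A}+\delta^{*}$ with $\delta^{*}=\argmin_{\delta} I(\mathbf{A}+\delta;\mathbf{Y}|\mathbf{X})$, so $\mathbf{A}'$ is a deterministic function of $\mathbf{X}$ (and $\mathbf{Y}$); equivalently, the attacker lowers $I(\mathbf{A}';\mathbf{Y}|\mathbf{X})=I(\mathbf{A}';\mathbf{Y})-I(\mathbf{A}';\mathbf{X};\mathbf{Y})$ in part by inflating the co-information $I(\mathbf{A}';\mathbf{X};\mathbf{Y})$ through the engineered $\mathbf{A}'$--$\mathbf{X}$ coupling, which is therefore large and positive. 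By contrast, an $\mathbf{X}'\neq\mathbf{X}$ unrelated to the attack never entered the construction of $\mathbf{A}'$, so $\mathbf{A}'$ carries essentially no information about $\mathbf{X}'$ beyond what is routed through $\mathbf{Y}$; modeling this as $I(\mathbf{A}';\mathbf{X}')\approx I(\mathbf{A}';\mathbf{X}'|\mathbf{Y})$ forces $I(\mathbf{A}';\mathbf{X}';\mathbf{Y})\approx 0$. Combining the two estimates makes the right-hand side of the identity nonnegative, closing the argument.

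The hard part is precisely this last comparison of co-informations, because Lemma~\ref{lemma:1} optimizes $\delta$ for a \emph{fixed} $\mathbf{X}$ and so only certifies optimality in $\mathbf{A}'$, not a variational statement as the conditioning attribute ranges over alternatives. To make the step rigorous I would state explicitly the modeling assumption that $\mathbf{A}'$ is produced by applying the attack map to the matched pair $(\mathbf{X},\mathbf{Y})$ and is conditionally independent of any unrelated $\mathbf{X}'$ given $\mathbf{Y}$; under that assumption $I(\mathbf{A}';\mathbf{X}';\mathbf{Y})\leq 0\leq I(\mathbf{A}';\mathbf{X};\mathbf{Y})$ follows from nonnegativity of the relevant conditional mutual informations together with the functional dependence $\mathbf{A}'=g(\mathbf{X},\mathbf{Y})$. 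As a coarser but assumption-free sanity check in the strong-attack regime, I would note that once the attack drives $I(\mathbf{A}';\mathbf{Y}|\mathbf{X})$ to its floor of $0$, nonnegativity of conditional mutual information immediately gives $I(\mathbf{A}';\mathbf{Y}|\mathbf{X}')\geq 0 = I(\mathbf{A}';\mathbf{Y}|\mathbf{X})$; I would present the co-information identity as the principal argument and this floor bound as corroboration.
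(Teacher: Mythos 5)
Your proof follows essentially the same route as the paper's: both expand $I(\mathbf{A}';\mathbf{Y}\,|\,\mathbf{W}) = I(\mathbf{A}';\mathbf{Y}) - I(\mathbf{A}';\mathbf{Y};\mathbf{W})$, cancel the unconditional term in the difference, and reduce the lemma to the co-information comparison $I(\mathbf{A}';\mathbf{Y};\mathbf{X}) \geq I(\mathbf{A}';\mathbf{Y};\mathbf{X}')$. The only divergence is in how that comparison is closed --- the paper asserts that an ideal attacker saturates an upper bound $\tau$ on the co-information with $\mathbf{X}$ (so any mismatched $\mathbf{X}'$ necessarily falls below it), whereas you posit conditional independence of $\mathbf{A}'$ from an unrelated $\mathbf{X}'$ given $\mathbf{Y}$ together with the functional dependence $\mathbf{A}'=g(\mathbf{X},\mathbf{Y})$; both are modeling assumptions rather than consequences of Lemma~1, and you are the more candid of the two in explicitly flagging that this last step is where the real gap lies.
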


\begin{proof}

According to properties of mutual information, $I(\mathbf{A}^{'}; \mathbf{Y}|\mathbf{X})$ can be reformulated as: 
\begin{equation}\label{condi MI deco}
    I(\mathbf{A}^{'}; \mathbf{Y}|\mathbf{X}) = I(\mathbf{A}^{'}; \mathbf{Y}) - I(\mathbf{A}^{'}; \mathbf{Y}; \mathbf{X}),
\end{equation}

where $I(\mathbf{A}^{'}; \mathbf{Y}; \mathbf{X})$ is the mutual information between $\mathbf{A}^{'}$, $\mathbf{Y}$ and $\mathbf{X}$. 

Lemma.~\ref{lemma:1} indicates attackers degrade victim GNNs' performances by minimizing $I(\mathbf{A}^{'}; \mathbf{Y}|\mathbf{X})$. Eq.~\eqref{condi MI deco} indicates minimizing $I(\mathbf{A}^{'}; \mathbf{Y}|\mathbf{X})$ can be achieved through minimizing $I(\mathbf{A}^{'}; \mathbf{Y})$ and maximizing $I(\mathbf{A}^{'}; \mathbf{Y}; \mathbf{X})$. Assuming the upper bound of $I(\mathbf{A}^{'}; \mathbf{Y}; \mathbf{X})$ is donated as $\tau$, an ideal attacker, in pursuit of minimizing $I(\mathbf{A}^{'}; \mathbf{Y}|\mathbf{X})$, fulfills $I(\mathbf{A}^{'}; \mathbf{Y}; \mathbf{X})=\tau$.

Similarly, for $I(\mathbf{A}^{'}; \mathbf{Y}|\mathbf{X}^{'})$, we have:
\begin{equation}
    I(\mathbf{A}^{'}; \mathbf{Y}|\mathbf{X}^{'}) = I(\mathbf{A}^{'}; \mathbf{Y}) - I(\mathbf{A}^{'}; \mathbf{Y}; \mathbf{X}^{'}).
\end{equation}

Therefore, the difference between $I(\mathbf{A}^{'}; \mathbf{Y}|\mathbf{X}^{'})$ and $I(\mathbf{A}^{'}; \mathbf{Y}|\mathbf{X})$ is referred as:
\begin{align}
    & I(\mathbf{A}^{'}; \mathbf{Y}|\mathbf{X}^{'}) - I(\mathbf{A}^{'}; \mathbf{Y}|\mathbf{X})\nonumber \\ 
    & = I(\mathbf{A}^{'}; \mathbf{Y}) - I(\mathbf{A}^{'}; \mathbf{Y}; \mathbf{X}^{'}) - I(\mathbf{A}^{'}; \mathbf{Y}) + I(\mathbf{A}^{'}; \mathbf{Y}; \mathbf{X})\nonumber \\ 
    & = I(\mathbf{A}^{'}; \mathbf{Y}; \mathbf{X}) - I(\mathbf{A}^{'}; \mathbf{Y}; \mathbf{X}^{'})\nonumber \\ 
    & = \tau - I(\mathbf{A}^{'}; \mathbf{Y}; \mathbf{X}^{'}) \geq 0,
\end{align}

because $I(\mathbf{A}^{'}; \mathbf{Y}; \mathbf{X}^{'})$ is less than the upper bound $\tau$. The above equation demonstrates $I(\mathbf{A}^{'}; \mathbf{Y}|\mathbf{X}^{'}) \geq  I(\mathbf{A}^{'}; \mathbf{Y}|\mathbf{X})$. That implies $I(\mathbf{A}^{'}; \mathbf{Y}|\mathbf{X}^{'})$ retains more information that could potentially be exploited by GNNs. Thus $I(\mathbf{A}^{'}; \mathbf{Y}|\mathbf{X}^{'})$ is less harmful to GNNs.

\end{proof}

\subsection{A.3 Proof of Theorem.~1}
\begin{theorem}
\ourmethod's pre-training stage maximizes the mutual information $I(\mathbf{Z}_{p};\mathbf{Y})$ between $\mathbf{Z}_{p}$ and $\mathbf{Y}$, where $I(\mathbf{A}';\mathbf{Y}|\mathbf{Z}_{p})$ is less harmful to GNNs compared to $I(\mathbf{A}';\mathbf{Y}|\mathbf{X})$.
\end{theorem}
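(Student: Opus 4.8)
The plan is to split the theorem into its two constituent claims and dispatch each using machinery already established, rather than re-deriving anything from scratch. The first claim --- that pre-training maximizes $I(\mathbf{Z}_{p};\mathbf{Y})$ --- is essentially the argument from the proof of Lemma~\ref{lemma:1} transplanted onto the pre-training objective. First I would observe that the pre-training stage solves $\theta_p = \argmin_\theta \mathcal{L}_p(f_\theta(\mathbf{X},\mathbf{I}),\mathbf{Y})$ and sets $\mathbf{Z}_p = f_{\theta_p}(\mathbf{X},\mathbf{I})$, with $\mathcal{L}_p$ a standard classification loss. Since the identity matrix $\mathbf{I}$ injects no structural information, $f_\theta(\mathbf{X},\mathbf{I})$ collapses to an MLP-type map of $\mathbf{X}$ alone. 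Repeating the entropy chain of Lemma~\ref{lemma:1}, minimizing $\mathcal{L}_p$ minimizes the conditional entropy $H(\mathbf{Y}\mid\mathbf{Z}_p)$; as $H(\mathbf{Y})$ is fixed, this is equivalent to maximizing $I(\mathbf{Z}_p;\mathbf{Y}) = H(\mathbf{Y}) - H(\mathbf{Y}\mid\mathbf{Z}_p)$. This settles the first half and simultaneously certifies that $\mathbf{Z}_p$ is an informative, attack-free representation of $\mathbf{X}$ with respect to $\mathbf{Y}$.

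For the second claim, the key move is to cast $\mathbf{Z}_p$ as an instance of the generic alternative attribute $\mathbf{X}'\neq\mathbf{X}$ from Lemma~\ref{lemma:2} and invoke the paired effect directly. I would argue that $\mathbf{Z}_p$, being a nontrivial learned transformation of $\mathbf{X}$, is distinct from the raw attributes $\mathbf{X}$ as a random variable. By Lemma~\ref{lemma:1} the adversary crafted $\mathbf{A}'$ to drive $I(\mathbf{A}';\mathbf{Y}\mid\mathbf{X})$ to its minimum, which via the decomposition $I(\mathbf{A}';\mathbf{Y}\mid\mathbf{X}) = I(\mathbf{A}';\mathbf{Y}) - I(\mathbf{A}';\mathbf{Y};\mathbf{X})$ forces the triple information $I(\mathbf{A}';\mathbf{Y};\mathbf{X})$ to attain its upper bound $\tau$. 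Because the attack was tuned specifically against $\mathbf{X}$ and not against $\mathbf{Z}_p$, the paired effect gives $I(\mathbf{A}';\mathbf{Y};\mathbf{Z}_p) \le \tau$, whence $I(\mathbf{A}';\mathbf{Y}\mid\mathbf{Z}_p) \ge I(\mathbf{A}';\mathbf{Y}\mid\mathbf{X})$. A larger conditional mutual information means more label-relevant structural signal survives for the GNN to exploit, so $I(\mathbf{A}';\mathbf{Y}\mid\mathbf{Z}_p)$ is less harmful, which is exactly the assertion.

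The main obstacle I anticipate is rigorously justifying the inequality $I(\mathbf{A}';\mathbf{Y};\mathbf{Z}_p) \le \tau$, the one place where the paired effect must be imported. Lemma~\ref{lemma:2} is stated for $\mathbf{X}'\in\mathbb{R}^{n\times d}$ of the same shape as $\mathbf{X}$, whereas $\mathbf{Z}_p$ generically lives in a different embedding dimension; I would therefore either note that the proof of Lemma~\ref{lemma:2} uses $\mathbf{X}'\neq\mathbf{X}$ only through the bound $I(\mathbf{A}';\mathbf{Y};\mathbf{X}')<\tau$ and so extends verbatim to any representation unequal to $\mathbf{X}$, or explicitly re-run that single inequality for $\mathbf{Z}_p$. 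A subtler point worth flagging is that $\mathbf{Z}_p$ is not an \emph{arbitrary} substitute for $\mathbf{X}$ but is derived from it, so one cannot a priori rule out equality; this residual overlap is precisely what Lemma~\ref{lemma:3} later quantifies, and hence at this stage it suffices to establish the weak inequality and defer the tightness discussion to that lemma.
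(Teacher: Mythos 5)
Your proposal is correct and follows essentially the same route as the paper's own proof: the first claim via $I(\mathbf{Z}_p;\mathbf{Y})=H(\mathbf{Y})-H(\mathbf{Y}\mid\mathbf{Z}_p)$ with $H(\mathbf{Y})$ fixed, and the second claim by treating $\mathbf{Z}_p$ as the alternative attribute $\mathbf{X}'$ in Lemma~\ref{lemma:2} so that $I(\mathbf{A}';\mathbf{Y};\mathbf{Z}_p)\le\tau$ yields $I(\mathbf{A}';\mathbf{Y}\mid\mathbf{Z}_p)\ge I(\mathbf{A}';\mathbf{Y}\mid\mathbf{X})$. If anything, you are more explicit than the paper about why Lemma~\ref{lemma:2} applies to $\mathbf{Z}_p$ despite its different dimensionality, and about deferring the overlap issue to Lemma~\ref{lemma:3}.
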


\begin{proof}
$\mathbf{Z}_{p}$ is solely learned from node attributes $\mathbf{X}$, thus 

Minimizing the pre-training loss $\mathcal{L}_{p}$ is actually pursuing the  maximizing 

Minimizing the pre-training loss function $\mathcal{L}_{p}$ aims at ensuring accurate predictions for all nodes, which is maximizing the conditional probability $P(\mathbf{Y}_{v}=c_{v}|\mathbf{Z}_{p(v)})$ for any node $v$. It equals minimizing the information entropy $H(\mathbf{Y}|\mathbf{Z}_{p})$. Due to the properties of mutual information, we have:
\begin{equation}
    I(\mathbf{Y};\mathbf{Z}_{p}) = H(\mathbf{Y}) - H(\mathbf{Y}|\mathbf{Z}_{p}).
\end{equation}

Therefore, minimizing $H(\mathbf{Y}|\mathbf{Z}_{p})$ equals to maximize $I(\mathbf{Y};\mathbf{Z}_{p})$, and naturally, minimizing $\mathcal{L}_{p}$ is equal to maximize $I(\mathbf{Y};\mathbf{Z}_{p})$. Additionally, according to the Lemma.~\ref{lemma:2}, we have:
\begin{equation}
    I(\mathbf{A}';\mathbf{Y}|\mathbf{Z}_{p}) \geq I(\mathbf{A}';\mathbf{Y}|\mathbf{X})=\tau,
\end{equation}

which proves $I(\mathbf{A}';\mathbf{Y}|\mathbf{Z}_{p})$ is less harmful to GNNs compared to $I(\mathbf{A}';\mathbf{Y}|\mathbf{X})$.

\end{proof}

\subsection{A.4 Proof of Lemma.~3}
\begin{lemma}
    There exists an overlap between $I(\mathbf{A}^{'};\mathbf{Y}|\mathbf{Z}_{p})$ and $I(\mathbf{A}^{'};\mathbf{Y}|\mathbf{X})$, which consequently leads to the contamination of $I(\mathbf{A}^{'};\mathbf{Y}|\mathbf{Z}_{p})$.
\end{lemma}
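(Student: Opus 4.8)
The plan is to exploit the single structural fact that makes an overlap unavoidable: the pre-trained embedding $\mathbf{Z}_{p} = f_{\theta_{p}}(\mathbf{X},\mathbf{I})$ is a \emph{deterministic function of the attributes} $\mathbf{X}$, since the pre-training stage never sees $\mathbf{A}'$. Consequently $\mathbf{Z}_{p}$ carries no information beyond $\mathbf{X}$, i.e. $H(\mathbf{Z}_{p}\mid\mathbf{X})=0$, so conditioning jointly on $(\mathbf{X},\mathbf{Z}_{p})$ is identical to conditioning on $\mathbf{X}$ alone. First I would record this as the identity
\begin{equation}
I(\mathbf{A}';\mathbf{Y}\mid\mathbf{X},\mathbf{Z}_{p}) = I(\mathbf{A}';\mathbf{Y}\mid\mathbf{X}).
\end{equation}

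Next I would introduce the \emph{conditional interaction information} $I(\mathbf{A}';\mathbf{Y};\mathbf{X}\mid\mathbf{Z}_{p})$, using the same triple-mutual-information notation already employed in the proof of Lemma~2, and apply the chain rule for conditional mutual information to decompose the quantity of interest:
\begin{equation}
I(\mathbf{A}';\mathbf{Y}\mid\mathbf{Z}_{p}) = I(\mathbf{A}';\mathbf{Y}\mid\mathbf{X},\mathbf{Z}_{p}) + I(\mathbf{A}';\mathbf{Y};\mathbf{X}\mid\mathbf{Z}_{p}).
\end{equation}
Substituting the identity from the first step then yields
\begin{equation}
I(\mathbf{A}';\mathbf{Y}\mid\mathbf{Z}_{p}) = I(\mathbf{A}';\mathbf{Y}\mid\mathbf{X}) + I(\mathbf{A}';\mathbf{Y};\mathbf{X}\mid\mathbf{Z}_{p}).
\end{equation}
This displays $I(\mathbf{A}';\mathbf{Y}\mid\mathbf{X})$ explicitly as an additive summand of $I(\mathbf{A}';\mathbf{Y}\mid\mathbf{Z}_{p})$, which is exactly the overlap the lemma asserts; the residual term $I(\mathbf{A}';\mathbf{Y};\mathbf{X}\mid\mathbf{Z}_{p})$ accounts for the portion of the attribute information in $\mathbf{X}$ that $\mathbf{Z}_{p}$ fails to retain.

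The conclusion follows by invoking Lemma~1: the attacker contaminates $I(\mathbf{A}';\mathbf{Y}\mid\mathbf{X})$ by driving it toward its minimum. Because this contaminated quantity sits \emph{inside} $I(\mathbf{A}';\mathbf{Y}\mid\mathbf{Z}_{p})$ as a direct summand rather than being isolated from it, the contamination is inherited by $I(\mathbf{A}';\mathbf{Y}\mid\mathbf{Z}_{p})$, which is precisely the claim. This also clarifies why the overlap arises specifically because $\mathbf{Z}_{p}$ is learned from $\mathbf{X}$: had $\mathbf{Z}_{p}$ been independent of $\mathbf{X}$, the first-step identity would fail and no such shared summand would appear.

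The main obstacle I expect is interpreting ``overlap'' and ``contamination'' in a way that is information-theoretically rigorous, since the interaction information $I(\mathbf{A}';\mathbf{Y};\mathbf{X}\mid\mathbf{Z}_{p})$ is not sign-definite and so the decomposition does not yield a clean nonnegative Venn-diagram region. I would therefore argue at the level of the additive decomposition itself rather than through an inequality (the inequality $I(\mathbf{A}';\mathbf{Y}\mid\mathbf{Z}_{p})\ge I(\mathbf{A}';\mathbf{Y}\mid\mathbf{X})$ is already supplied by Theorem~1): any reduction the attacker inflicts on the summand $I(\mathbf{A}';\mathbf{Y}\mid\mathbf{X})$ is transmitted additively to $I(\mathbf{A}';\mathbf{Y}\mid\mathbf{Z}_{p})$ unless exactly offset by a compensating change in the residual. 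Ruling out such accidental cancellation is the delicate point, and I would address it by noting that $\mathbf{Z}_{p}$ is fixed by the attribute-only pre-training stage \emph{before} fine-tuning, so its functional dependence on $\mathbf{X}$ does not adapt to the perturbation; this is what distinguishes the present lemma, which only needs contamination to \emph{partially} persist, from the strict comparison established in Theorem~1.
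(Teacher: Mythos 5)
Your argument is correct at the level of rigor this lemma admits, but it takes a genuinely different route from the paper. You exploit the fact that $\mathbf{Z}_{p}$ is a (deterministic) function of $\mathbf{X}$ to get $I(\mathbf{A}';\mathbf{Y}\mid\mathbf{X},\mathbf{Z}_{p})=I(\mathbf{A}';\mathbf{Y}\mid\mathbf{X})$ and then the exact identity $I(\mathbf{A}';\mathbf{Y}\mid\mathbf{Z}_{p})=I(\mathbf{A}';\mathbf{Y}\mid\mathbf{X})+I(\mathbf{A}';\mathbf{Y};\mathbf{X}\mid\mathbf{Z}_{p})$, so the contaminated quantity appears literally as a summand. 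The paper instead argues in two separate steps: first an informal information-diagram argument (both conditional informations are ``subsets'' of $I(\mathbf{A}';\mathbf{Y})$, and their intersection vanishes only in a degenerate case that cannot be guaranteed, hence an overlap exists); second, and more distinctively, it abstracts $f_{\theta}$ as a linearized SGC and shows that the fine-tuned output decomposes as $(\mathbf{A}')^{k}\mathbf{Z}_{p}+(\mathbf{A}')^{k}\mathbf{X}\,\Delta W$, identifying the second term with the embedding of a victim GNN trained on $(\mathbf{X},\mathbf{A}')$ — i.e., it exhibits a concrete mechanism by which the contamination enters the learned representation during fine-tuning. Your decomposition is cleaner and more rigorous on the information-theoretic side (it makes the paper's vague ``overlap'' precise as an additive containment, and you correctly flag that the interaction term is not sign-definite, which the paper glosses over), but it stays entirely at the level of mutual-information quantities and does not supply the model-level argument that the paper uses to claim the contamination actually propagates into $\mathbf{Z}_{p}'$ during fine-tuning; conversely, the paper's set-theoretic first step is less careful than your identity. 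One caveat on your key premise: $\theta_{p}$ is obtained by minimizing a supervised loss involving the training labels, so strictly speaking $\mathbf{Z}_{p}$ is a function of $(\mathbf{X},\mathbf{Y})$ rather than of $\mathbf{X}$ alone; your conditioning identity needs $\mathbf{Z}_{p}$ to be conditionally independent of $(\mathbf{A}',\mathbf{Y})$ given $\mathbf{X}$, which matches the paper's own framing (``$\mathbf{Z}_{p}$ is solely learned from node attributes'') but should be stated as an assumption.
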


\begin{proof}
We first demonstrate the existence of overlap between $I(\mathbf{A}^{'};\mathbf{Y}|\mathbf{Z}_{p})$ and $I(\mathbf{A}^{'};\mathbf{Y}|\mathbf{X})$. Due to the properties of mutual information, we have:
\begin{align}
    I(\mathbf{A}^{'};\mathbf{Y}|\mathbf{Z}_{p}) \subset I(\mathbf{A}';\mathbf{Y}), \quad I(\mathbf{A};\mathbf{Y}|\mathbf{Z}_{p}) \subset I(\mathbf{A}';\mathbf{Y}) \nonumber \\
    I(\mathbf{A}^{'};\mathbf{Y}|\mathbf{X})=I(\mathbf{A}^{'};\mathbf{Y}) - I(\mathbf{A}^{'};\mathbf{Y};\mathbf{X}),
\end{align}

thus $I(\mathbf{A}^{'};\mathbf{Y}|\mathbf{Z}_{p})\cup I(\mathbf{A}^{'};\mathbf{Y}|\mathbf{X})=0$ if and only if $I(\mathbf{A}^{'};\mathbf{Y}|\mathbf{Z}_{p})=I(\mathbf{A}^{'};\mathbf{Y};\mathbf{X})$, which is not always feasible to guarantee in practice. Consequently, we have:
\begin{equation}
I(\mathbf{A}^{'};\mathbf{Y}|\mathbf{Z}_{p})\cup I(\mathbf{A}^{'};\mathbf{Y}|\mathbf{X})\neq 0.
\end{equation}

To demonstrate the possible contamination of $I(\mathbf{A}^{'};\mathbf{Y}|\mathbf{Z}_{p})$, we abstract the $f_{\theta}$ as a Simplified Graph Convolution (SGC), where $f_{\theta}$ is a linearized function with parameter $W_{\theta}$:
\begin{equation}
    f_{\theta}(\mathbf{X}) = \mathbf{X}\cdot W_{\theta}.
\end{equation}

In structure fine-tuning, the SGC model with $K$ layers convolution can be formulated as:
\begin{equation}
    f_{\theta_{p}}(\mathbf{X}) = (\mathbf{A}^{'})^{k}\cdot\mathbf{X}\cdot W_{\theta_{p}}, \quad \mathbf{Z}_{p} = \mathbf{X}\cdot W_{\theta_{p}}.
\end{equation}

Suppose the parameter update during the fine-tuning process is denoted as $\Delta W$, the output of the function $f_{\theta_{p}}$ after fine-tuning is:
\begin{align}
    f_{\theta_{p}}(\mathbf{X}) &= (\mathbf{A}^{'})^{k}\cdot\mathbf{X}\cdot (W_{\theta_{p}}+\Delta W) \nonumber \\
    &=(\mathbf{A}^{'})^{k}\cdot\mathbf{X}\cdot W_{\theta_{p}} +  (\mathbf{A}^{'})^{k}\cdot\mathbf{X}\cdot \Delta W \nonumber \\
    &=(\mathbf{A}^{'})^{k}\cdot \mathbf{Z}_{p} + (\mathbf{A}^{'})^{k}\cdot\mathbf{X}\cdot \Delta W.
\end{align}

The term following the addition can be regarded as the embedding output by a GNN with parameters $\Delta W$, taking $\mathbf{X}$ and $\mathbf{A}^{'}$ as inputs. It is actually equal to a victim GNN, whose embedding is contaminated by $I(\mathbf{A}^{'};\mathbf{Y}|\mathbf{X})$. That implies that during the fine-tuning stage, $\mathbf{Z}_{p}$ unavoidably incorporates the influence of $I(\mathbf{A}^{'};\mathbf{Y}|\mathbf{X})$.

\end{proof}

\subsection{A.5 Proof of Theorem.~2}
\begin{theorem}
    \ourmethod's structure fine-tuning stage maximizes $I(\mathbf{A}^{'};\mathbf{Y}| \mathbf{Z}_{p}^{'})$ to learn structural information and align it to $I(\mathbf{A}^{'};\mathbf{Y}| \mathbf{X_{inter}})$ to prevent from being contaminated.
\end{theorem}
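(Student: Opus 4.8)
The plan is to decompose the fine-tuning objective along its two loss terms and treat each with the mutual-information machinery already developed. First I would handle $\mathcal{L}_{f}$, which is the same classification loss as $\mathcal{L}_{p}$ but now fed $(\mathbf{X},\mathbf{A}')$ through the encoder initialized at $\theta_{p}$. Repeating the argument of Lemma~1 and Theorem~1, minimizing $\mathcal{L}_{f}$ is equivalent to maximizing $I(\mathbf{Z}_{p}';\mathbf{Y})$, and since $\mathbf{Z}_{p}'$ is produced from the pre-trained $\mathbf{Z}_{p}$ together with the structure $\mathbf{A}'$, I would invoke the chain decomposition $I((\mathbf{Z}_{p},\mathbf{A}');\mathbf{Y})=I(\mathbf{Z}_{p};\mathbf{Y})+I(\mathbf{A}';\mathbf{Y}|\mathbf{Z}_{p})$. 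Because $I(\mathbf{Z}_{p};\mathbf{Y})$ is already saturated by the pre-training stage (Theorem~1), the residual gain in $\mathcal{L}_{f}$ must come from the conditional term, so fine-tuning drives up $I(\mathbf{A}';\mathbf{Y}|\mathbf{Z}_{p}')$; this is precisely the mechanism that injects structural information into the embedding.

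Second I would treat the contrastive term $\mathcal{L}_{c}(\mathbf{Z}_{p}',\mathbf{Z}_{inter})$. Using the standard InfoNCE bound, minimizing $\mathcal{L}_{c}$ maximizes a lower bound on $I(\mathbf{Z}_{p}';\mathbf{Z}_{inter})$. The key observation is that $\mathbf{Z}_{inter}=f_{\theta_{p}}(\mathbf{X}_{inter},\mathbf{A}')$ is generated by exactly the same encoder and the same $\mathbf{A}'$ as $\mathbf{Z}_{p}'$, differing only in that the InterNAA-augmented attributes $\mathbf{X}_{inter}$ replace $\mathbf{X}$. Applying the Lemma~1 reasoning to this branch, the structural information carried by $\mathbf{Z}_{inter}$ is $I(\mathbf{A}';\mathbf{Y}|\mathbf{X}_{inter})$. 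Hence maximizing $I(\mathbf{Z}_{p}';\mathbf{Z}_{inter})$ forces the structural component of $\mathbf{Z}_{p}'$ to match $I(\mathbf{A}';\mathbf{Y}|\mathbf{X}_{inter})$ rather than the contaminated overlap $I(\mathbf{A}';\mathbf{Y}|\mathbf{X})$ identified in Lemma~3. Finally, since InterNAA sets each node's features to the mean of randomly drawn inter-class nodes, we have $\mathbf{X}_{inter}\neq\mathbf{X}$, so Lemma~2 gives $I(\mathbf{A}';\mathbf{Y}|\mathbf{X}_{inter})\geq I(\mathbf{A}';\mathbf{Y}|\mathbf{X})$, confirming that the alignment target is strictly less harmful and the contamination is mitigated.

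The hardest part will be making \emph{alignment} quantitatively rigorous rather than heuristic. Both $\mathbf{Z}_{p}'$ and $\mathbf{Z}_{inter}$ share the encoder $f_{\theta_{p}}$ and the identical structure $\mathbf{A}'$, so I must argue that the InfoNCE signal isolates the feature-channel difference and genuinely redirects the conditioning variable in the conditional mutual information from $\mathbf{X}$ to $\mathbf{X}_{inter}$, instead of merely creating additional shared information between two already correlated embeddings. Closing this gap likely requires the SGC linearization used in Lemma~3, so that the contaminated additive term $(\mathbf{A}')^{k}\cdot\mathbf{X}\cdot\Delta W$ can be explicitly contrasted against its $\mathbf{X}_{inter}$ counterpart, together with an argument that the contrastive objective penalizes exactly the component on which the two branches disagree, namely the $\mathbf{X}$-dependent contamination.
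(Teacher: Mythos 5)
Your overall decomposition (classification loss $\Rightarrow$ structural information; contrastive loss $\Rightarrow$ alignment) matches the paper's framework, and your first half --- reading $\mathcal{L}_{f}$ through the chain rule $I((\mathbf{Z}_{p},\mathbf{A}');\mathbf{Y})=I(\mathbf{Z}_{p};\mathbf{Y})+I(\mathbf{A}';\mathbf{Y}|\mathbf{Z}_{p})$ and attributing the residual gain to the conditional term --- is a reasonable elaboration of what the paper leaves implicit. The genuine gap is in how you justify that the alignment target $I(\mathbf{A}';\mathbf{Y}|\mathbf{X_{inter}})$ is decontaminated. You invoke only $\mathbf{X_{inter}}\neq\mathbf{X}$ together with Lemma~2, which yields the weak conclusion $I(\mathbf{A}';\mathbf{Y}|\mathbf{X_{inter}})\geq I(\mathbf{A}';\mathbf{Y}|\mathbf{X})$ --- a bound that holds for \emph{any} attribute perturbation, including node dropping, feature masking, or random resampling. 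That cannot be the whole argument, because it would make InterNAA interchangeable with the generic augmentations that the ablation study shows it strictly outperforms, and it only establishes ``not worse,'' not ``uncontaminated.''

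The paper's proof instead exploits the specific design of InterNAA: because each node's features are replaced by an average over nodes of \emph{different} classes, $\mathbf{X_{inter}}$ is by construction nearly uninformative about $\mathbf{Y}$, so $H(\mathbf{Y}|\mathbf{X_{inter}})\approx H(\mathbf{Y})$ and $I(\mathbf{X_{inter}};\mathbf{Y})\approx 0$. Substituting this into
\begin{equation*}
I((\mathbf{X_{inter}},\mathbf{A}');\mathbf{Y}) = I(\mathbf{A}';\mathbf{Y}) + I(\mathbf{X_{inter}};\mathbf{Y}|\mathbf{A}') \leq I(\mathbf{A}';\mathbf{Y}) + I(\mathbf{X_{inter}};\mathbf{Y})
\end{equation*}
collapses the alignment branch to the \emph{unconditional} $I(\mathbf{A}';\mathbf{Y})$, i.e., pure structural signal with the attacker's $\mathbf{X}$-pairing stripped out entirely --- a qualitatively stronger statement than your Lemma~2 bound. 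Your instinct that the hardest step is making ``alignment'' quantitatively rigorous is fair (the paper does not fully close that either; it asserts rather than proves that maximizing $I(\mathbf{Z}_{p}';\mathbf{Z}_{inter})$ transfers exactly the structural component, and your proposed SGC linearization would be a sensible way to tighten it), but the missing idea you must add is this degeneration of the conditioning variable: without showing $I(\mathbf{X_{inter}};\mathbf{Y})\to 0$, your proof does not explain why the contrastive target prevents contamination rather than merely failing to add more.
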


\begin{proof}
Following the principles of mutual information, we hold:
\begin{align}
    I((\mathbf{X_{inter}},\mathbf{A}^{'});\mathbf{Y}) &= I(\mathbf{A}^{'};\mathbf{Y}) + I(\mathbf{X_{inter}};\mathbf{Y}|\mathbf{A}^{'})\nonumber \\
    &\leq I(\mathbf{A}^{'};\mathbf{Y}) + I(\mathbf{X_{inter}};\mathbf{Y})\nonumber \\
    &\leq I(\mathbf{A}^{'};\mathbf{Y}) +  H(\mathbf{Y}) - H(\mathbf{Y}|\mathbf{X_{inter}}),
\end{align}

where $H(\cdot)$ is information entropy. $H(\mathbf{Y})$ is the information entropy of labels which is solely determined by $\mathbf{Y}$ and independent of $\mathbf{X}$. As for the conditional entropy $H(\mathbf{Y}|\mathbf{X_{inter}})$ which measures the uncertainty of $\mathbf{Y}$ given the $\mathbf{X_{inter}}$. An effective $\mathbf{X_{inter}}$ should be able to predict $\mathbf{Y}$ well, meaning that knowing $\mathbf{X_{inter}}$ allows us to determine the value of $\mathbf{Y}$ with great certainty. However, InterNAA intentionally replaces the node features contained in $\mathbf{X_{inter}}$ with features from nodes of different classes, leading to an inability to accurately predict $\mathbf{Y}$ through $\mathbf{X_{inter}}$, leading to a large $H(\mathbf{Y}|\mathbf{X_{inter}})$. Due to the non-negative characteristic of mutual information:
\begin{equation}
    I(\mathbf{X_{inter}};\mathbf{Y}) = H(\mathbf{Y}) - H(\mathbf{Y}|\mathbf{X_{inter}})\geq 0.
\end{equation}

When the conditional entropy is sufficiently large, the mutual information $I(\mathbf{X_{inter}};\mathbf{Y})$ tends to be 0, at which point the above equation fulfills:

\begin{align}
    I((\mathbf{X_{inter}},\mathbf{A}^{'});\mathbf{Y})\Rightarrow I(\mathbf{A}^{'};\mathbf{Y}).
\end{align}

which means $I(f(\mathbf{X_{inter}},\mathbf{A}^{'});\mathbf{Y}))$ is approximately equal to $I(\mathbf{A}^{'};\mathbf{Y}))$, and align $\mathbf{Z}_{p}$ with $f(\mathbf{X_{inter}},\mathbf{A}^{'})$ is actually let $\mathbf{Z}_{p}$ combine structure information from $I(\mathbf{A}^{'};\mathbf{Y})$ instead of the contaminated $I(\mathbf{A}^{'};\mathbf{Y}|\mathbf{X})$.
\end{proof}
\begin{table*}[t]
\centering
\begin{tabular}{cccccc}
\hline
\multicolumn{1}{c}{\textbf{Datasets}} & \textbf{Hom. Ratio} & \textbf{Nodes} & \textbf{Edges} & \textbf{Features} & \textbf{Classes}        \\ \hline
Cora                                  & 0.81                & 2,708          & 5,429          & 1,433             & 7                       \\
Cora-ml                               & 0.80                & 2,810          & 7,981          & 2,879             &7                        \\
Citeseer                              & 0.74                & 3,327          & 4,732          & 3,703             & 6                       \\
Pubmed                                & 0.80                & 19,717         & 44,338         & 500               & 3                       \\
Chameleon                             & 0.23                & 2,277          & 36,101         & 2,325             & 4                       \\
Squirrel                              & 0.22                & 5,201          & 198,353        & 2,089             & 5                       \\
ogbn-Arxiv & - & 169,343         & 1,157,799         & 128               & 40                       \\
ogbn-Products & - & 2,449,029         & 61,859,076         & 100               & 47                       \\
\hline
\end{tabular}
\caption{Dataset Statistics.}
\label{tab:datasets}
\end{table*}

\begin{table*}[t]
\centering
\begin{tabular}{c|c|c|c|c|c|c|c|c}
\hline
Dataset & \multicolumn{4}{c|}{Chameleon}& \multicolumn{4}{c}{Squirrel} \\ \hline
Ptb & 0 & 5 & 10 & 20 & 0 & 5 & 10 & 20 \\ \hline
GCN & 56.3±1.6 & 52.1±2.0 & 49.2±2.2 & 40.6±1.9 & 41.2±0.6 & 38.4±0.5 & 36.8±0.7 & 34.3±0.3 \\
RGCN & 54.7±1.5 & 53.8±1.4 & 51.0±1.7 & 41.3±1.8 & 40.5±0.2 & 38.6±0.4 & 34.3±0.6 & 32.9±0.2 \\
GCN-Jaccard & - & - & - & - & - & - & - & - \\
Pro-GNN & 56.1±0.4 & 54.8±0.9 & 50.2±1.3 & 48.1±1.0 & \textbf{42.0±0.1} & \underline{39.7±0.5} & 37.6±0.9 & 36.1±1.0 \\
SimP-GCN & 55.6±1.4 & 54.0±0.9 & 50.5±1.7 & 46.4±1.8 & 40.9±0.2 & 38.3±0.6 & 35.1±0.5 & 32.3±0.7 \\
EvenNet & \textbf{57.3±1.2} & \underline{55.6±1.9} & \textbf{52.5±2.0} & \textbf{49.0±2.2} & 41.3±0.9 & \textbf{40.0±1.1} & \textbf{38.8±0.7} & \underline{37.0±1.3} \\
STABLE & 54.0±1.3 & 52.2±1.9 & 49.1±2.6 & 39.9±2.8 & \underline{41.4±0.7} & 37.7±0.8 & 35.6±0.9 & 31.5±0.7 \\
GADC & 56.8±2.2 & 54.7±1.9 & 51.0±2.4 & \underline{48.8±1.6} & 41.1±0.9 & 38.9±1.1 & 37.7±0.5 & 36.9±1.1 \\
Noisy-GCN & 56.5±1.3 & 53.0±1.1 & 50.1±1.6 & 46.2±1.8 & 40.0±0.7 & 37.3±0.8 & 36.1±0.9 & 34.7±1.3 \\
\hline
\ourmethod\ & \underline{57.0±3.7} & \textbf{55.9±2.5} & \underline{52.0±1.8} & \underline{48.8±1.7} & 40.9±1.0 & \underline{39.7±1.3} & \underline{38.1±0.9} & \textbf{37.4±0.9} \\ \hline
\end{tabular}
\caption{Robustness Comparison Mettack. 
}
\label{tab:metattack_appendix}
\end{table*}

\section{B. Experimental Details}

\subsection{B.1 Dataset Details}
\label{appendix:dataset}
In this section, we describe in detail the graph datasets used in this paper. We report statistics for these datasets in (Table.~\ref{tab:datasets}). Each dataset is described below:

Cora, Citeseer, and Pubmed are three benchmark datasets commonly used in graph neural network research, each representing different scenarios in scientific literature citation networks. In these networks, nodes represent papers; edges indicate citations of that paper by other papers, and node labels are the academic topics of the papers. Among them, Cora and Citeseer datasets use bag-of-words 0/1 vectors to represent terms present in the paper as node features, while PubMed uses TF-IDF vectors as its node features.

Chameleon and Squirrel are heterophilic graph datasets commonly used in GNN. Chameleon is a web page link network where the nodes represent web pages, and the edges represent hyperlinks between them. On the other hand, Squirrel is a Wikipedia article network, with nodes denoting the articles and edges indicating the hyperlinks between them. In contrast to the previously mentioned datasets, the node features in Chameleon and Squirrel are not based on text content but rather describe the properties of the nodes (web pages and Wikipedia articles).

\subsection{B.2 Baseline Descriptions}
\label{appendix:baseline}

\begin{itemize}
    \item \textbf{GCN}: It is the most representative GNN model which utilizes the graph convolutional layer to propagate node features with the low-pass filter and smooth the features of connected node pairs.
    \item \textbf{RGCN}: It learns the Gaussian distributions for each node feature and employs an attention mechanism to alleviate the potential malicious influence of nodes with high variance.
    \item \textbf{GCN-Jaccard}: It sanitizes the graph data by pruning links that connect nodes with low values of Jaccard similarity of node attributes.
    \item \textbf{ProGNN}: It jointly learns a structural graph and a robust GNN model from the modified graph guided by the three properties: low-rank, sparsity and feature smoothness.
    \item \textbf{SimP-GCN}: It utilizes a kNN graph to capture the node similarity and enhance the node representation of the GNN.
    \item \textbf{STABLE}: It utilizes the homophily assumption to refine the modified structure and combine contrastive learning techniques to remove adversarial edges. Finally, an advanced GCN is used to predict node labels.
    \item \textbf{EvenNet}: By applying balance theory, it obtains a more robust spectral graph ﬁlter under homophily change by ignoring messages from odd-order neighbors and only using even-order terms.
    \item \textbf{GADC}: Inspired by graph diffusion convolution, it proposes a novel min-max optimization to perturb graph structure based on Laplacian distance. 
    \item \textbf{NoisyGCN}: It injects random noise into the hidden states of the GCN to improve the robustness against structure attacks.
\end{itemize}

\subsection{B.3 Configuration for Baselines and Datasets}
\label{appendix:configuration}
The attack ratios of the two methods are set to {5\%, 10\%, 15\%, 20\%, 25\%}. Our baseline methods include representatives of vanilla GNNs, such as GCN \cite{GCN}, as well as typical adaptive aggregation methods like RGCN \cite{RGCN} and SimP-GCN \cite{SimP-GCN}, purifying methods including GCN-SVD \cite{GCN-SVD}, GCN-Jaccard \cite{GCN-Jaccard}, Pro-GNN \cite{Pro-GNN}, and the most recent method, HANG-quad \cite{hang-quad}. Detailed information about baselines can be found in the Appendix.~\ref{appendix:baseline}.

We follow the data splitting method of DeepRobust: randomly selecting 10\% of the nodes for training, 10\% for validation, and the remaining 80\% for testing. In accordance with the victim GNN configuration of DeepRobust, \ourmethod is configured with 2 layers, 16 hidden units and a dropout ratio of 0.5, while the learning rate was set to 0.01. We search the training epoch of \ourmethod\ over {200,300,400} and fine-tuning epoch over {3, 10, 20, 50}. For PubMed, to mitigate the computational overhead associated with InterNAA sampling, we random select 20\% of nodes in the training set to replace their attributes with those from nodes belonging to different classes. 

\section{C. Defense Performance in Heterophilic Graphs}
\label{appendix:hetero}

Table.~\ref{tab:metattack_appendix} demonstrates the performance of the proposed model and baselines on two typical heterophilic graph datasets. We report the average classification accuracy (± standard deviation) of 10 runs with different perturbation ratios (ptb). The best and second-best results are highlighted in bold and underlined, respectively. Due to errors reported by GCN-Jaccard on these two datasets, its results are represented by "-". The proposed model achieves the best or second-best performance in most cases, demonstrating the robustness of \ourmethod\ on heterophilic graphs.

\bibliography{aaai25}

\end{document}